\newtheorem{theorem}{Theorem}
\def \x {\bm{x}}
\def \w {\bm{w}}
\def \D {\mathcal{D}}
\def \Y  {\mathcal{Y}}
\def \F  {\mathcal{F}}
\def \H  {\mathcal{H}}
\def \I  {\mathbb{I}}
\def \X {\mathcal{X}}
\def \Y {\mathcal{Y}}
\def \D {\mathcal{D}}
\def \MSVMAV {{M\scriptsize SV\normalsize M\scriptsize Av} }
\newtheorem{lemma}{Lemma}
\newtheorem{definition}{Definition}
\title{On the Optimization of Margin Distribution}
\author{
Meng-Zhang Qian$^1$\and
Zheng Ai$^1$\and
Teng Zhang$^2$\And
Wei Gao$^1$\\
\affiliations
$^1$National Key Laboratory for Novel Software Technology, Nanjing University, China\\
$^2$School of Computer Science and Technology, Huazhong University of Science and Technology, China\\
\emails
\{qianmz, aiz, gaow\}@lamda.nju.edu.cn,
tengzhang@hust.edu.cn}
\begin{document}

\maketitle

\begin{abstract}
Margin has played an important role on the design and analysis of learning algorithms during the past years, mostly working with the maximization of the minimum margin. Recent years have witnessed the increasing empirical studies on the optimization of margin distribution according to different statistics such as medium margin, average margin, margin variance, etc., whereas there is a relative paucity of theoretical understanding.

In this work, we take one step on this direction by providing a new generalization error bound, which is heavily relevant to margin distribution by incorporating ingredients such as average margin and semi-variance, a new margin statistics for the characterization of margin distribution. Inspired by the theoretical findings, we propose the {M\scriptsize SV\normalsize M\scriptsize Av}, an efficient approach to achieve better performance by optimizing margin distribution in terms of its empirical average margin and semi-variance. We finally conduct extensive experiments to show the superiority of the proposed {M\scriptsize SV\normalsize M\scriptsize Av} approach.
\end{abstract}

\section{Introduction}
Margin has played an important role on the design of learning algorithms from the pioneer work \cite{Vapnik1982}, which proposed the famous Support Vector Machines (SVMs) by maximizing the minimum margin, i.e. the smallest distance from the instances to the classification boundary. \citeauthor{Boser:Guyon:Vapnik1992}~\shortcite{Boser:Guyon:Vapnik1992} introduced the kernel technique for SVMs to relax the linear separation. Large margin has been one of the most important principles on the design of learning algorithms in the history of machine learning \cite{Cortes:Vapnik1995,Schapire:Freund:Bartlett:Lee1998,Rosset:Zhu:Hastie2003,Shivaswamy:Jebara:2010,Ji:Srebro:Telgarsky2021}, even for recent deep learning \cite{Sokolic:Giryes:Sapiro:Rodrigues2017,Weinstein:Fine:HelOr2020}.

Various margin-based bounds have been presented to study the generalization performance of learning algorithms. \citeauthor{Bartlett:ShaweTaylor1999}~\shortcite{Bartlett:ShaweTaylor1999} possibly presented the first generalization margin bounds based on VC dimension and fat-shattering dimension. \citeauthor{Bartlett:Mendelson2002}~\shortcite{Bartlett:Mendelson2002} introduced the famous margin bounds based on Rademacher complexity, a data-dependent and finite-sample complexity measure. \citeauthor{Kaban:Durrant2020}~\shortcite{Kaban:Durrant2020} took advantage of geometric structure to provide margin bounds for compressive learning. \citeauthor{Gronlund:Kamma:Larsen2020}~\shortcite{Gronlund:Kamma:Larsen2020} presented the near-tight margin generalization bound for SVMs. Margin has also been an ingredient to analyze the generalization performance for other algorithms such as boosting \cite{Schapire:Freund:Bartlett:Lee1998,Breiman1999,Gao:Zhou2013}, and deep learning \cite{Bartlett:Foster:Telgarsky2017,Wei:Ma2020}.

Margin distribution has been considered as an important ingredient on the design and analysis of learning algorithms, and the basic idea is to optimize some margin statistics, relevant to the whole margin distribution rather than single margin. \citeauthor{Garg:Roth2003} \shortcite{Garg:Roth2003} introduced the model complexity measure to optimize margin distribution. \citeauthor{Pelckmans:Suykens:DeMoor2007}~\shortcite{Pelckmans:Suykens:DeMoor2007} optimized margin distribution via average margin, while \citeauthor{Aiolli:DaSanMartino:Sperduti2008}~\shortcite{Aiolli:DaSanMartino:Sperduti2008} tried to maximize the minimum margin and average margin. \citeauthor{Zhang:Zhou2014}~\shortcite{Zhang:Zhou2014} proposed the large margin distribution machine by considering average margin and margin variance simultaneously, which motivates the design of a series learning algorithms on the optimization of margin distribution \cite{Cheng:Zhang:Wen2016,Rastogi:Anand:Chandra2020}. For deep learning, \citeauthor{Jiang:Krishnan:Mobahi:Bengio2018}~\shortcite{Jiang:Krishnan:Mobahi:Bengio2018} introduced some margin distribution statistics, such as total variation, median quartile, etc., to analyze the generalization of neural networks. There is a relative paucity of theoretical understanding on how to correlate margin distribution with the generalization of learning algorithms.

This work tries to fill the gap between theoretical and empirical studies on the optimization of margin distribution, and the main contributions can be summarized as follows:
\begin{itemize}
\item We present a new generalization error bound, which is heavily relevant to margin distribution by incorporating factors such as average margin and semi-variance. Here, semi-variance is a new statistics, counting the average of squared distances between average margin and the instances' margin, that is smaller than average margin.

  \item Motivated from our theoretical result, we develop the {M\scriptsize SV\normalsize M\scriptsize Av} approach, which tries to achieve better generalization performance by optimizing margin distribution in terms of empirical average margin and semi-variance. We find the closed-form solution in optimization, and improve its efficiency via Sherman-Morrison formula.

  \item We conduct extensive empirical studies to validate the effectiveness of the {M\scriptsize SV\normalsize M\scriptsize Av} approach in comparisons with the state-of-the-art algorithms on large-margin or margin distribution optimization.
\end{itemize}

The rest of this paper is organized as follows. Section~\ref{sec:Preli} introduces some preliminaries. Section~\ref{sec:Bound} presents theoretical analysis. Section~\ref{sec:PDM} proposes the {M\scriptsize SV\normalsize M\scriptsize Av} approach. Section~\ref{sec:exp} conducts extensive empirical studies, and  Section~\ref{sec:con} concludes with future work.

\section{Preliminaries}\label{sec:Preli}
Let $\X\subseteq\mathbb{R}^d$ and $\Y=\{+1,-1\}$ denote the instance and label space, respectively. Suppose that \(\D\) is an underlying (unknown) distribution over the product space \(\X\times\Y\). Let
\[
S_n=\{(\x_1,y_1), (\x_2,y_2), \cdots, (\x_n,y_n)\}
\]
be a training sample with each element drawn independently and identically (i.i.d.) from distribution \(\D\). We use $\Pr_{\D}[\cdot]$ and $E_{\D}[\cdot]$ to refer to the probability and expectation according to distribution $\D$, respectively.

Let $\H=\{h\colon \X\to[-1,+1]\}$ be a function space. We define the classification error (or generalization risk) with respect to function $h\in\H$ and distribution \(\D\), as
\[
\mathcal{E}(h)=\Pr\nolimits_{\D}[\text{sgn}[h(\x)]\neq y]=E_{\D}[\I[yh(\x)\leq0]]\ ,
\]
where the sign function $\text{sgn}[\cdot]$ returns $+1$, $0$ and $-1$ if the argument is positive, zero and negative, respectively, and the indicator function \(\I[\cdot]\) returns $1$ when the argument is true, and $0$ otherwise.

Given an example $(\x,y)$, the \emph{margin} of $h\in \H$ is defined as $yh(\x)$, which can be viewed as a measure of the confidence of the classification. We further define the average margin of $h\in \H$ over distribution $\D$ as
\begin{equation}\label{eq:average_margin}
\theta_h=E_{(\x,y)\sim\D}[yh(\x)] \ .
\end{equation}

We also introduce the empirical Rademacher complexity \cite{Bartlett:Mendelson2002} to measure the complexity of function space $\H$ as follows:
\[
  \widehat{\mathfrak{R}}_{S_n}(\mathcal{H})=E_{\sigma_1,\sigma_2,\ldots,\sigma_n} \left[\sup_{h\in \mathcal{H}}\frac{1}{n}\sum_{i=1}^n \sigma_i h(\x_i)\right],
\]
where each $\sigma_i$ is a Rademacher variable with $\Pr[\sigma_i=+1]=\Pr[\sigma_i=-1]=1/2$ for $i\in[n]$.

We finally introduce some notations used in this work. Write \([d]=\{1,2,\ldots,d\}\) for integer \(d>0\), and $\langle \w, \x\rangle$ represents the inner product of $\w$ and $\x$. Let $\bm{I}_d$ be the identity matrix of size $d\times d$, and denote by $^\top$ the transpose of vectors or matrices. For positive \(f(n)\) and \(g(n)\), we write \(f(n)=O(g(n))\) if \(g(n)/f(n)\to c\) for constant \(c<+\infty\).

\section{Theoretical Analysis}\label{sec:Bound}
We begin with the squared margin loss as follows:
\begin{definition} For $\theta>0$, we define the squared margin loss $\ell_\theta$ with respect to function $h\in\H$ as
\begin{equation*}
\ell_\theta\big(h,(\bm{x}, y)\big)=\big[(1-yh(\bm{x})/\theta)_+\big]^2\ ,
\end{equation*}
where $(a)_+=\max(0,a)$.
\end{definition}
This is a simple extension from the traditional margin loss \cite{Bartlett:Mendelson2002}, while we consider the squared loss and unbounded constraint for the negative $yh(\bm{x})$. The margin parameter $\theta$ is generally irrelevant to learned function $h$ and data distribution in most previous theoretical and algorithmic studies.

In this work, we select margin parameter $\theta$ as the average margin when $\theta_h>0$, to correlate generalization performance with margin distribution, that is,
\[
\theta=\theta_h=E_{\D}[yh(\x)] \ ,
\]
which is dependent on data distribution and learned function.  Given training sample $S_n$, we try to learn a function $h$ by minimizing the squared margin loss as follows:
\begin{equation}
\min_{h\in\H\colon \theta_h >0} \left\{\frac{1}{n}\sum_{i=1}^n \left[\left(1-\frac{y_i h(\x_i)}{\theta_h}\right)_+\right]^2\right\}\ .\label{eq:goal_loss}
\end{equation}
For simplicity, we further introduce the notion of \emph{margin semi-variance} \cite{Markowitz:1952} as follows:
\begin{definition}
Given function $h\in\H$ and training sample $S_n$, we define the margin semi-variance as
\[
\text{SV}(h)=\frac{1}{n}\sum_{i=1}^n\left[\left(\theta_h - y_ih(\x_i)\right)_+\right]^2\ ,
\]
where $\theta_h$ denotes the average margin defined by Eqn.~\eqref{eq:average_margin}.
\end{definition}
The margin semi-variance essentially counts the average of squared deviation between average margin and the margins $y_ih(\x_i)$, which are smaller than average margin. This yields an equivalent expression for Eqn.~\eqref{eq:goal_loss} as follows:
\begin{equation*}
\min_{h\in\H:\theta_h\geq\nu > 0}\ \big\{\text{SV}(h)/\theta_h^2\big\}\ ,
\end{equation*}
that is, optimizing the squared margin loss with parameter $\theta=\theta_h$ is equivalent to minimizing margin semi-variance and maximizing average margin simultaneously.

For most real applications, we could learn some relatively-good functions from sufficient training data. Motivated from the notion of weak learner in boosting \cite{Freund:Schapire1996}, we formally define the \emph{set of relatively-good functions} for function space $\H$ as follows:
\[
\H_\nu=\{h\in\H,\theta_h\geq \nu\} \ \text{ for some small constant }\ \nu>0\ .
\]
Essentially, a relatively-good function is similar to a weak learner, which achieves slightly better performance than the randomly-guessed classifier.

\

We now present the main theoretical result as follows:
\begin{theorem}\label{thm:Main}
For small constant $\nu\geq0$, let $\H$ be a function space with relative-good set $\H_\nu$. For any $\delta\in(0,1)$ and for every $h\in\H_\nu$, the following holds with probability at least $1-\delta$ over the training sample $S_n$
\[
\mathcal{E}(h)\leq \frac{\text{SV}(h)}{\theta_h^2}+O\left(\frac{\widehat{\mathfrak{R}}_n(\H_\nu)}{\theta_h^2}+\sqrt{\frac{1}{2n}\ln\frac{4n}{\delta}} \right)\ ,
\]
with empirical Rademacher complexity $\widehat{\mathfrak{R}}_n(\H_\nu)\leq \widehat{\mathfrak{R}}_n(\H)$.
\end{theorem}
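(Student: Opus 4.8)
The plan is to follow the classical Rademacher-complexity route for margin bounds, adapted to the data-dependent choice of margin parameter $\theta=\theta_h$. First I would invoke the pointwise comparison $\I[yh(\x)\leq 0]\leq \big[(1-yh(\x)/\theta_h)_+\big]^2$, valid whenever $\theta_h>0$ (the right-hand side is at least $1$ on the event $yh(\x)\leq 0$ and nonnegative elsewhere). Taking expectations over $\D$ gives $\mathcal{E}(h)\leq E_\D\big[\phi_{\theta_h}(yh(\x))\big]$, where I write $\phi_\theta(u)=\big[(1-u/\theta)_+\big]^2$. The matching empirical average is exactly the normalized semi-variance, since $\phi_{\theta_h}(y_ih(\x_i))=\big[(\theta_h-y_ih(\x_i))_+\big]^2/\theta_h^2$ and hence $\tfrac1n\sum_i \phi_{\theta_h}(y_ih(\x_i))=\text{SV}(h)/\theta_h^2$. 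Thus the theorem reduces to a uniform bound on $E_\D[\phi_{\theta_h}(yh)]-\text{SV}(h)/\theta_h^2$ over $h\in\H_\nu$.

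Second, for a fixed margin value $\theta$ I would apply the standard Rademacher-complexity uniform-convergence bound to the loss class $\mathcal{L}_\theta=\{(\x,y)\mapsto\phi_\theta(yh(\x)):h\in\H_\nu\}$, whose functions are bounded by $(1+1/\theta)^2=O(1)$ for $\theta\geq\nu$, together with two applications of McDiarmid's inequality to pass from the expected to the empirical Rademacher complexity. The map $u\mapsto\phi_\theta(u)$ is Lipschitz on $[-1,1]$ with constant $O(1/\theta^2)$, so Talagrand's contraction lemma yields $\widehat{\mathfrak{R}}_n(\mathcal{L}_\theta)=O\big(\widehat{\mathfrak{R}}_n(\H_\nu)/\theta^2\big)$; here the label $y_i$ is absorbed because $\sigma_iy_i$ and $\sigma_i$ are identically distributed, so the composed class inherits the complexity of $\H_\nu$ itself. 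This already produces the prototypes of the two terms $\widehat{\mathfrak{R}}_n(\H_\nu)/\theta^2$ and $\sqrt{\tfrac{1}{2n}\ln\tfrac1\delta}$ that appear in the statement.

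The main obstacle is that $\theta=\theta_h$ is not fixed but varies with $h$ over $[\nu,1]$, so a single application of the fixed-$\theta$ bound does not suffice. I would resolve this by a covering/union-bound argument: discretize $[\nu,1]$ into a grid of $O(n)$ points of spacing $\approx 1/n$, apply the fixed-$\theta$ bound at each grid point with confidence $\delta/O(n)$, and take a union bound, which is precisely the mechanism that converts $\ln\tfrac1\delta$ into $\ln\tfrac{4n}{\delta}$. For an arbitrary $h$ I then snap $\theta_h$ to the nearest grid point $\theta^\ast\leq\theta_h$ and control the resulting mismatch: since $\theta\mapsto\phi_\theta(u)$ is Lipschitz in $\theta$ (uniformly for $u\in[-1,1]$ and $\theta\geq\nu$), both $|E_\D[\phi_{\theta_h}]-E_\D[\phi_{\theta^\ast}]|$ and the analogous difference of empirical averages are $O(1/n)$, hence negligible against the $O(1/\sqrt n)$ confidence term, while $1/(\theta^\ast)^2=O(1/\theta_h^2)$ for $n$ large. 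The delicate point is to perform this discretization uniformly in $\theta$ while keeping the Lipschitz-in-$\theta$ estimate's dependence on $\nu$ buried inside the constant hidden by $O(\cdot)$.

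Assembling the three steps yields the claimed inequality, and the final remark $\widehat{\mathfrak{R}}_n(\H_\nu)\leq\widehat{\mathfrak{R}}_n(\H)$ is immediate from monotonicity of the empirical Rademacher complexity under the inclusion $\H_\nu\subseteq\H$, since the supremum is taken over a smaller set. I expect the genuine work to lie entirely in the third step, namely coupling the data-dependent parameter $\theta_h$ to a fixed grid without degrading either the $1/\theta_h^2$ scaling of the complexity term or the $O(1/\sqrt n)$ confidence rate.
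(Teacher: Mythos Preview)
Your proposal is correct and follows essentially the same approach as the paper: a fixed-$\theta$ Rademacher bound via two applications of McDiarmid plus Talagrand contraction (the paper's Lemma~2), followed by a grid of $O(n)$ values over $[\nu,1]$ and a union bound to handle the data-dependent $\theta_h$. The only cosmetic differences are that the paper snaps $\theta_h$ to the next grid point \emph{above} rather than below (which makes the complexity-term comparison $1/a_{i^*}^2\le 1/\theta_h^2$ immediate) and controls the loss mismatch by an explicit three-case analysis on the position of $y_ih(\x_i)$ relative to $\theta_h$ and $a_{i^*}$ rather than a generic Lipschitz-in-$\theta$ argument; both variants give the same $O(1/n)$ slack.
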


This theorem presents a new generalization error bound, which is heavily relevant to margin distribution by incorporating factors such as average margin and semi-variance. This could shed some new insights on the design of algorithms on the optimization of margin distribution as shown in Section~\ref{sec:PDM}.

The proof follows the empirical Rademacher complexity \cite{Bartlett:Mendelson2002}, while the challenge lies in the distribution-dependent average margin $\theta_h$. We solve it by constructing a sequence of intervals for average margin $\theta_h$, and the detailed proof is presented in Appendix~\ref{app:proof}.

It remains to study the empirical Rademacher complexity in Theorem~\ref{thm:Main}, and we focus on linear and kernel functions.  For instance space $\X=\{\x\in\mathbb{R}^d\colon\|\x\|\leq r\}$ and linear function space $\H=\{h(\bm{x}) = \bm{w^\top x}\colon \|\bm{w}\|\leq \Lambda\}$, let $\H_\nu$ denote the set of relatively-good classifiers. We upper bound the empirical Rademacher complexity as
\[
\widehat{\mathfrak{R}}_{S_n}(\H_\nu)\leq  \widehat{\mathfrak{R}}_{S_n}(\H)\leq r\Lambda/\sqrt{n}
\]
from the work of \cite{ShalevSchwartz:BenDavid2014}. For kernel function $\kappa(\cdot, \cdot)$, we have the kernel function space
\[
\H=\Big\{h(\bm{x}) = \sum_{i=1}^n a_i \kappa(\bm{x}_i, \bm{x})\colon\sum_{i,j=1}^n a_i a_j \kappa(\bm{x}_i, \bm{x}_j)\leq \Lambda^2\Big\}.
\]
We could upper bound the empirical Rademacher complexity for kernel functions, from \cite{Bartlett:Mendelson2002},
\[
\widehat{\mathfrak{R}}_{S_n}(\H_\nu)\leq \widehat{\mathfrak{R}}_{S_n}(\H)\leq \frac{2\Lambda}{n}\Big(\sum_{i=1}^n \kappa(\bm{x}_i, \bm{x}_i)\Big)^{1/2}\ .
\]

It is also noteworthy that the average margin $\theta_h$ is unknown on the design of new algorithms because of the unknown distribution $\D$, and we resort to the empirical average margin from training sample $S_n$ in practice.

\section{The {M\small SV\Large M\small Av} Approach}\label{sec:PDM}
Motivated from Theorem~\ref{thm:Main}, this section develops the {M\scriptsize SV\normalsize M\scriptsize Av} approach on the optimization of margin distribution, and we focus on linear and kernel functions.

\subsection{Linear Functions}\label{subsec:PDM_Linear}
For linear space $\H=\{h_{\w}(\bm{x}) = \langle \w, \x\rangle\colon \|\bm{w}\|=1\}$ and training sample $S_n$, we have the empirical average margin
\begin{equation*}
\hat{\theta}_{\w} = \frac{1}{n}\sum_{i=1}^n y_i \langle \w, \x_i\rangle\ .
\end{equation*}
For simplicity, we omit a bias term on the design of algorithm, and we will augment $\w$ and instance $\x$ with bias term $b$ and $1$ in experiments, respectively, as shown in Section~\ref{sec:exp}. Our optimization problem can be formally written as
\begin{equation*}
\min_{ \|\bm{w}\|_2=1}\Big\{\frac{\widehat{\text{SV}}(\w)}{\hat{\theta}_{\w}} \Big\}\ ,
\end{equation*}
where empirical average margin $\hat{\theta}_{\w}>0$ and empirical margin semi-variance $\widehat{\text{SV}}(\w) = \sum_{i=1}^n[(\hat{\theta}_{\w}-y_i\langle\w, \x_i\rangle)_+]^2/n$.
Obviously, this is a non-convex optimization problem, and we would optimize the empirical margin semi-variance and average margin alternatively.

Initialize the linear function $\w_0$ by optimizing empirical average margin, that is,
\begin{equation}\label{eq:initial_primal}
\bm{w}_0=\mathop{\arg\max}_{\|\w\|^2_2=1} \sum_{i=1}^n \frac{y_i \langle\w,\x_i\rangle}{n} =\sum_{i=1}^n\frac{y_i \bm{x}_i}{\|\sum_{i=1}^n y_i \bm{x}_i\|_2}\ ,
\end{equation}
where we solve $\bm{w}_0$ from its dual problem using Lagrangian function, and the details are given in Appendix~\ref{app:formulation}.

\subsubsection*{Optimization of Empirical Margin Semi-variance}
In the $k$-th iteration $(k\geq1)$ with previous classifier $\w_{k-1}$, we first calculate the empirical average margin  $\hat{\theta}_{\w_{k-1}}$ as
\begin{equation}\label{eq:empirical_average_margin1}
\hat{\theta}_{\w_{k-1}} = \frac{1}{n}\sum_{i=1}^n y_i \langle \w_{k-1}, \x_i\rangle\ .
\end{equation}
We then introduce the minimization of empirical margin semi-variance as follows:
\[
\min_{\bm{w}}\Big\{\sum_{i=1}^n\frac{[(\hat{\theta}_{\w_{k-1}}-y_i \langle \w, \x_i\rangle)_+]^2}{n}+\beta_k\left\|\w-\w_{k-1}\right\|_2^2\Big\},
\]
where $\beta_k$ is a proximal regularization parameter. We now introduce the following index set, to present a closed-form solution for the above minimization,
\begin{equation}\label{eq:index_set}
\mathcal{A}_k=\big\{i\colon y_i\langle \w_{k-1}, \x_i\rangle<\hat{\theta}_{\w_{k-1}}\quad\text{for}\quad i\in [n]\big\}\ ,
\end{equation}
i.e., the index set of instance with margins below the empirical average margin $\hat{\theta}_{\w_{k-1}}$. We can rewrite the minimization of empirical margin semi-variance as
\[
\min_{\w}\Big\{\sum_{i\in\mathcal{A}_k}\frac{(\hat{\theta}_{\w_{k-1}}-y_i \langle \w, \x_i\rangle)^2}{n}+\beta_k\left\|\w-\w_{k-1}\right\|_2^2\Big\}\ .
\]
Denote by $\w'_k$ the minimizer of the above problem, and we obtain the closed-form solution for $\w'_k$ as follows
\begin{equation}\label{eq:inverse}
\Big(\bm{I}_d + \sum_{i\in \mathcal{A}_k} \frac{\x_i \x_i^\top}{n\beta_k}\Big)^{-1}\Big(\frac{\hat{\theta}_{\w_{k-1}}}{n\beta_k}\sum_{i\in \mathcal{A}_k} y_i \x_i + \w_{k-1}\Big).
\end{equation}
One problem is to calculate the inverse in Eqn.~\eqref{eq:inverse}, which takes $O(d^3)$ computational costs ($d$ is dimensionality). This remains one challenge to deal with high-dimensional tasks.

We now present an efficient method to calculate of the inverse in Eqn.~\eqref{eq:inverse}. For simplicity, we denote by
\[
\bm{M}_k= \Big(\bm{I}_d + \sum\nolimits_{i\in \mathcal{A}_k} {\bm{x}_i \bm{x}_i^\top}\big/{n\beta_k}\Big)^{-1}\ \text{ for }\ k=1,2,\cdots,
\]
and it is easy to derive the following recursive relation:
\[
\bm{M}^{-1}_{k}=\bm{M}_{k-1}^{-1}-\sum_{i\in \mathcal{A}_{k-1}\setminus \mathcal{A}_{k}} \frac{\bm{x}_i \bm{x}_i^\top}{n\beta} + \sum_{i\in \mathcal{A}_{k}\setminus \mathcal{A}_{k-1}} \frac{\bm{x}_i \bm{x}_i^\top}{n\beta}\ ,
\]
with $\bm{M}_0=\bm{I}_d$.

We calculate $\bm{M}_{k}$ efficiently from $\bm{M}_{k-1}$ and Sherman-Morrison formula \cite{Sherman:Morrison1950}.  In other words, we initialize $\bm{M}'=\bm{M}_{k-1}$, and make the following updates iteratively, based on Sherman-Morrison formula,
\begin{eqnarray}
\bm{M}'=\bm{M}' - \frac{\bm{M}'\x_i\x_i^\top \bm{M}'}{\x_i^\top\bm{M}'\x_i - n\beta_k}\ \ \text{ for }\ \ i\in\mathcal{A}_{k-1}\setminus\mathcal{A}_k\ , \label{eq:calculate_M1}\\
\bm{M}'=\bm{M}' - \frac{\bm{M}'\x_i\x_i^\top \bm{M}'}{\x_i^\top\bm{M}'\x_i + n\beta_k}\ \ \text{ for }\ \ i\in\mathcal{A}_{k}\setminus\mathcal{A}_{k-1}\ . \label{eq:calculate_M2}
\end{eqnarray}
We then obtain $\bm{M}_k=\bm{M}'$, and the minimizer of empirical margin semi-variance is given by
\begin{equation}\label{eq:update1}
\w'_k=\bm{M}' \Big(\frac{\hat{\theta}_{\w_{k-1}}}{n\beta_k}\sum_{i\in \mathcal{A}_k} y_i \x_i + \w_{k-1}\Big).
\end{equation}

\subsubsection*{Optimization of Empirical Average Margin}
We now study the maximization of empirical average margin, which can be formalized as:
\[
\w_k=\mathop{\arg\min}_{\w}\Big\{ -\frac{1}{n}\sum_{i=1}^n y_i \langle\w, \x_i\rangle + \alpha_k\left\|\w - \w'_k\right\|_2^2\Big\},
\]
where $\alpha_k$ is a proximal regularization parameter. It is easy to obtain the closed-form solution as follows:
\begin{equation}\label{eq:update2}
\w_k=\w'_k+\frac{1}{2\alpha_k n}\sum_{i=1}^n y_i \bm{x}_i\ .
\end{equation}
We obtain $\w_k=\w_k/\|\w_k\|$ in the $k$-th iteration.  Algorithm~\ref{alg:PDM} presents a detailed description of our {M\scriptsize SV\normalsize M\scriptsize Av} approach.

\begin{algorithm}[tb]
  \caption{The {M\scriptsize SV\normalsize M\scriptsize Av} Approach}
  \label{alg:PDM}
  \textbf{Input}: Training sample $S_n$, iteration number $T$, and proximal parameters $\alpha_k$ and $\beta_k$\\
  \textbf{Output}: $\bm{w}$
  \begin{algorithmic}[1] 
  \STATE Initialize $\bm{M}_0 = \bm{I}_d$, $A_0 = \emptyset$ and $\w_0$ by Eqn.~\eqref{eq:initial_primal}
  \FOR {$k=1,2,\cdots,T$ }
  \STATE Compute empirical average margin $\hat{\theta}_{\w_{k-1}}$ by Eqn.~\eqref{eq:empirical_average_margin1}
  \STATE Solve the index set $\mathcal{A}_k$ by Eqn.~\eqref{eq:index_set}
  \STATE Compute $\bm{M}_k= \big(\bm{I}_d + \sum_{i\in \mathcal{A}_k} {\bm{x}_i \bm{x}_i^\top}\big/{n\beta_k}\big)^{-1}$ by Eqns.~\eqref{eq:calculate_M1} and \eqref{eq:calculate_M2}
  \STATE Compute the minimizer $\w_k'$ for empirical margin semi-variance by Eqn.~\eqref{eq:update1}
  \STATE Solve the empirical average margin maximizer $\w_k$ by Eqn.~\eqref{eq:update2}, and normalize $\w_k = \w_k / \left\|\w_k\right\|_2$
  \ENDFOR
  \STATE \textbf{return} $\w=\w_T$
  \end{algorithmic}
\end{algorithm}

\subsection{Kernelization}\label{subsec:PDM_Kernel}
This section focuses on kernel mapping $\bm{\phi}:\mathcal{X}\to\mathbb{H}$ for Hilbert space $\mathbb{H}$, we consider $h(\x)=\langle \w, \bm{\phi}(\x)\rangle$ with $\w\in \mathbb{H}$ and $\bm{\phi}(\x)\in\mathbb{H}$. The optimization problem is given by
\begin{equation*}
\min_{\bm{w}}\Big\{\frac{\widehat{\text{SV}}(\w)}{\hat{\theta}_{\w}} \Big\}\ ,
\end{equation*}
where average margin $\hat{\theta}_{\w}=\sum_{i=1}^n y_i\langle \w, \bm{\phi}(\x_i)\rangle/n>0$, and margin semi-variance
\[
\widehat{\text{SV}}(\w)=\frac{1}{n}\sum_{i=1}^n\left[\left(\hat{\theta}_{\w}-y_i\langle\w, \bm{\phi}(\x_i)\rangle\right)_+\right]^2\ .
\]

\begin{figure}[!t]
\centering
\includegraphics[width=0.23\textwidth]{./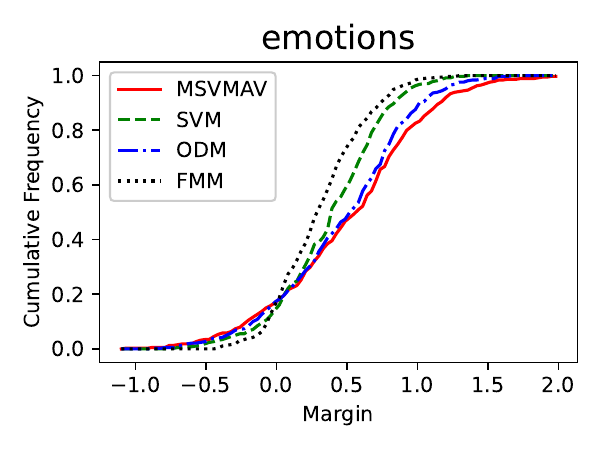}
\includegraphics[width=0.23\textwidth]{./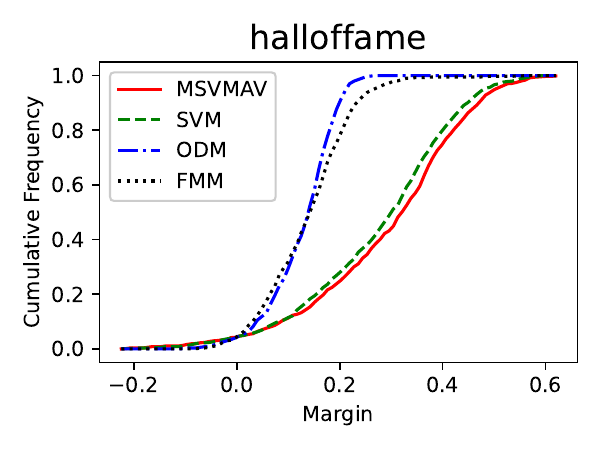}

\includegraphics[width=0.23\textwidth]{./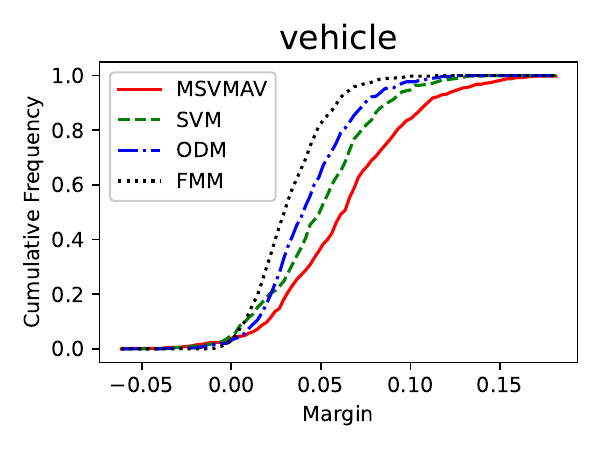}
\includegraphics[width=0.23\textwidth]{./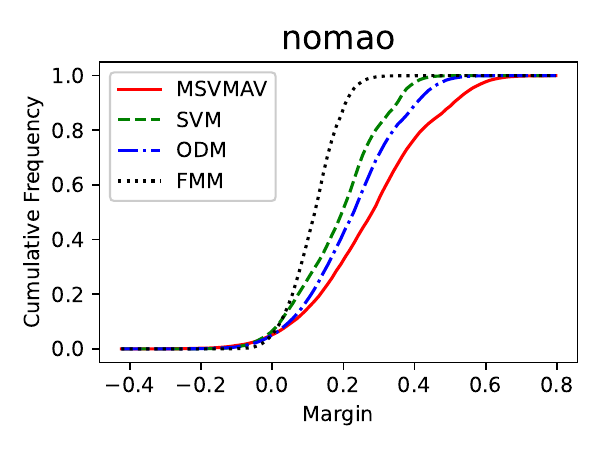}
\centering
\caption{Cumulative frequency versus margin of our {M\scriptsize SV\normalsize M\scriptsize Av} and other algorithms such as SVM, ODM and FMM. The more right the curve, the better the margin distribution.}\label{fig:margin}
\end{figure}

\begin{table*}[ht]
  \caption{Comparisons of the test accuracies (mean$\pm$std.) on 30 datasets. $\bullet/\circ$ indicates that our {M\scriptsize SV\small M\scriptsize Av} approach is significantly better/worse than the corresponding algorithms (pairwise t-tests at 95\% significance level). `N/A' indicates that LSSVM does not return results on the data set within 12 hours.
  }\label{table:PDM}
\begin{center}
\begin{small}
\setlength{\tabcolsep}{1.6mm}{
\begin{tabular}{|cccccccc|}
  \hline
  Dataset&{M\scriptsize SV\small M\scriptsize Av}&SVM&SVR&LSSVM&ODM&MAMC&FMM\\ \hline
advertise &.9837$\pm$.0015&.9823$\pm$.0021$\bullet$&.9825$\pm$.0024$\bullet$&.9819$\pm$.0019$\bullet$&.9701$\pm$.0021$\bullet$&.9132$\pm$.0007$\bullet$&.9799$\pm$.0025$\bullet$\\
australian     &.8314$\pm$.0079&.8167$\pm$.0077$\bullet$&.8171$\pm$.0048$\bullet$&.8220$\pm$.0036$\bullet$&.8104$\pm$.0065$\bullet$&.7700$\pm$.0201$\bullet$&.8295$\pm$.0100\\
bibtex         &.7417$\pm$.0040&.7378$\pm$.0069$\bullet$&.7469$\pm$.0060$\circ$&.7478$\pm$.0046$\circ$&.7469$\pm$.0053$\circ$&.6689$\pm$.0123$\bullet$&.7371$\pm$.0062$\bullet$\\
biodeg    &.8712$\pm$.0087&.8741$\pm$.0068&.8490$\pm$.0107$\bullet$&.8741$\pm$.0068&.8681$\pm$.0075&.6872$\pm$.0000$\bullet$&.8613$\pm$.0089$\bullet$\\
breastw        &.9730$\pm$.0038&.9725$\pm$.0041&.9701$\pm$.0051$\bullet$&.9684$\pm$.0051$\bullet$&.8428$\pm$.0099$\bullet$&.4088$\pm$.0000$\bullet$&.9720$\pm$.0063\\
diabetes       &.7530$\pm$.0088&.7561$\pm$.0104&.7478$\pm$.0077$\bullet$&.7491$\pm$.0078$\bullet$&.6110$\pm$.0146$\bullet$&.5974$\pm$.0000$\bullet$&.7496$\pm$.0082\\
emotions       &.8216$\pm$.0074&.7983$\pm$.0130$\bullet$&.7675$\pm$.0181$\bullet$&.8036$\pm$.0127$\bullet$&.8084$\pm$.0111$\bullet$&.6975$\pm$.0000$\bullet$&.7697$\pm$.0218$\bullet$\\
german   &.7518$\pm$.0097&.7457$\pm$.0095$\bullet$&.7525$\pm$.0107&.7530$\pm$.0106&.7502$\pm$.0095&.6800$\pm$.0000$\bullet$&.7525$\pm$.0102\\
halloffame&.9644$\pm$.0025&.9617$\pm$.0047$\bullet$&.9593$\pm$.0041$\bullet$&.9617$\pm$.0047$\bullet$&.9620$\pm$.0028$\bullet$&.9280$\pm$.0000$\bullet$&.9598$\pm$.0036$\bullet$\\
hill-valley   &.7771$\pm$.0631&.5972$\pm$.0195$\bullet$&.6999$\pm$.0572$\bullet$&.5972$\pm$.0195$\bullet$&.8898$\pm$.0079$\circ$&.5000$\pm$.0000$\bullet$&.8526$\pm$.0296$\circ$\\
kc1            &.8713$\pm$.0025&.8711$\pm$.0039&.8642$\pm$.0026$\bullet$&.8711$\pm$.0039&.8690$\pm$.0035$\bullet$&.8649$\pm$.0000$\bullet$&.8701$\pm$.0028$\bullet$\\
parkinsons     &.9222$\pm$.0277&.8923$\pm$.0388$\bullet$&.9342$\pm$.0315&.9444$\pm$.0338$\circ$&.8863$\pm$.0342$\bullet$&.7949$\pm$.0000$\bullet$&.8932$\pm$.0266$\bullet$\\
pbcseq         &.6626$\pm$.0074&.6439$\pm$.0147$\bullet$&.6595$\pm$.0104&.6555$\pm$.0104$\bullet$&.6562$\pm$.0125$\bullet$&.6700$\pm$.0187$\circ$&.6549$\pm$.0117$\bullet$\\
sleepdata      &.6925$\pm$.0056&.6743$\pm$.0051$\bullet$&.6833$\pm$.0064$\bullet$&.6712$\pm$.0124$\bullet$&.5691$\pm$.0156$\bullet$&.5659$\pm$.0000$\bullet$&.6833$\pm$.0047$\bullet$\\
students       &.8957$\pm$.0062&.8913$\pm$.0088$\bullet$&.8870$\pm$.0064$\bullet$&.8867$\pm$.0069$\bullet$&.8893$\pm$.0046$\bullet$&.5133$\pm$.0129$\bullet$&.8898$\pm$.0040$\bullet$\\
titanic        &.7658$\pm$.0038&.7636$\pm$.0000$\bullet$&.7636$\pm$.0000$\bullet$&.7636$\pm$.0000$\bullet$&.7509$\pm$.0211$\bullet$&.6386$\pm$.0000$\bullet$&.7636$\pm$.0000$\bullet$\\
tokyo1         &.9351$\pm$.0040&.9307$\pm$.0031$\bullet$&.9337$\pm$.0027$\bullet$&.9281$\pm$.0054$\bullet$&.9248$\pm$.0053$\bullet$&.7523$\pm$.0534$\bullet$&.9363$\pm$.0037\\
vehicle        &.9708$\pm$.0057&.9422$\pm$.0473$\bullet$&.9746$\pm$.0031$\circ$&.9748$\pm$.0034$\circ$&.9720$\pm$.0061&.7041$\pm$.0000$\bullet$&.9718$\pm$.0083\\
vertebra       &.8194$\pm$.0197&.7978$\pm$.0149$\bullet$&.7731$\pm$.0144$\bullet$&.7753$\pm$.0131$\bullet$&.7957$\pm$.0105$\bullet$&.7581$\pm$.0000$\bullet$&.7763$\pm$.0100$\bullet$\\
wdbc           &.9778$\pm$.0067&.9787$\pm$.0084&.9725$\pm$.0030$\bullet$&.9696$\pm$.0044$\bullet$&.9237$\pm$.0094$\bullet$&.5398$\pm$.1287$\bullet$&.9655$\pm$.0064$\bullet$\\
a9a            &.8433$\pm$.0009&.8417$\pm$.0007$\bullet$&.8403$\pm$.0007$\bullet$&.8358$\pm$.0006$\bullet$&.8430$\pm$.0012&.7577$\pm$.0000$\bullet$&.8390$\pm$.0012$\bullet$\\
acoustic&.7494$\pm$.0011&.7321$\pm$.0037$\bullet$&.7394$\pm$.0004$\bullet$&N/A$\bullet$&.7406$\pm$.0023$\bullet$&.7206$\pm$.0055$\bullet$&.7402$\pm$.0005$\bullet$\\
bank &.9057$\pm$.0004&.8960$\pm$.0008$\bullet$&.9015$\pm$.0004$\bullet$&N/A$\bullet$&.9021$\pm$.0006$\bullet$&.8854$\pm$.0000$\bullet$&.9021$\pm$.0005$\bullet$\\
eurgbp&.5332$\pm$.0027&.5042$\pm$.0061$\bullet$&.5317$\pm$.0028$\bullet$&N/A$\bullet$&.5111$\pm$.0084$\bullet$&.4985$\pm$.0000$\bullet$&.5288$\pm$.0028$\bullet$\\
jm1            &.8125$\pm$.0015&.8132$\pm$.0012&.8119$\pm$.0016&.8123$\pm$.0015&.8065$\pm$.0000$\bullet$&.8065$\pm$.0000$\bullet$&.8076$\pm$.0010$\bullet$\\
magic&.7998$\pm$.0005&.7976$\pm$.0012$\bullet$&.7928$\pm$.0008$\bullet$&.7912$\pm$.0009$\bullet$&.7943$\pm$.0007$\bullet$&.6514$\pm$.0000$\bullet$&.7987$\pm$.0014$\bullet$\\
nomao          &.9453$\pm$.0005&.9421$\pm$.0061$\bullet$&.9439$\pm$.0005$\bullet$&N/A$\bullet$&.9452$\pm$.0004&.7062$\pm$.0000$\bullet$&.9442$\pm$.0008$\bullet$\\
phishing       &.9388$\pm$.0011&.9405$\pm$.0017$\circ$&.9371$\pm$.0007$\bullet$&.9343$\pm$.0008$\bullet$&.9386$\pm$.0014&.5532$\pm$.0008$\bullet$&.9318$\pm$.0009$\bullet$\\
pol            &.9054$\pm$.0013&.8746$\pm$.0398$\bullet$&.9002$\pm$.0019$\bullet$&.9041$\pm$.0018$\bullet$&.6788$\pm$.0037$\bullet$&.6740$\pm$.0000$\bullet$&.8866$\pm$.0016$\bullet$\\
run-walk&.7260$\pm$.0007&.7169$\pm$.0000$\bullet$&.7077$\pm$.0004$\bullet$&N/A$\bullet$&.7104$\pm$.0061$\bullet$&.5431$\pm$.0757$\bullet$&.7261$\pm$.0054\\
\hline
\multicolumn{2}{|c}{Win/Tie/Loss}&23/6/1&24/4/2&23/4/3&22/6/2&29/0/1&22/7/1\\
\hline
\end{tabular}}

\end{small}
\end{center}
\end{table*}

It is intractable to solve such optimization problem directly because of high or even infinity dimensionality. According to Representer theorem \cite{Scholkopf:Smola:Bach2002}, we first have $\w^* = \sum_{i=1}^n a_i \bm{\phi}(\x_i)$, spanned by $\{\bm{\phi}(\x_i), i\in [n]\}$ with coefficients $a_1, \cdots, a_n$. This follows the prediction
\[
h(\x)=\langle \w, \bm{\phi}(\x) \rangle=\sum_{i=1}^n a_i\kappa(\x, \x_i)\ ,
\]
where $\kappa(\cdot,\cdot)$ denotes the kernel function. For simplicity, denote by $\bm{a}=(a_1, a_2, \cdots, a_n)^\top$, and write the gram matrix of instances in $S_n$ as
\[
\bm{K}=(\bm{K}_1,\bm{K}_2,\cdots,\bm{K}_n)=\big(K_{ij}\big)_{n\times n} = \big(\kappa(\bm{x}_i, \bm{x}_j)\big)_{n\times n},
\]
where $\bm{K}_i$ denotes the $k$-th column of matrix $\bm{K}$. Hence, our optimization problem can be further rewritten as
\begin{equation*}
  \min_{\bm{a}}\Big\{\frac{1}{n}\sum_{i=1}^n \left[\Big(1-\frac{y_i\langle\bm{K}_i,\bm{a} \rangle}{\hat{\theta}_{\bm{a}}}\Big)_+\right]^2\Big\}= \min_{\bm{a}}\Big\{\frac{\widehat{\text{SV}}(\bm{a})}{\hat{\theta}_{\bm{a}}} \Big\}\ ,
\end{equation*}
where the empirical average margin $\hat{\theta}_{\bm{a}}=\sum_{i=1}^n y_i \langle \bm{K}_i, \bm{a}\rangle/n$ and semi-variance $\widehat{\text{SV}}(\bm{a})=\sum_{i=1}^n[(\hat{\theta}_{\bm{a}}-y_i\langle\bm{K}_i, \bm{a}\rangle)_+]^2/n$.

We first initialize the classifier $\bm{a}_0$ by maximizing the empirical average margin as follows:
\[
  \bm{a}_0 = \mathop{\arg\max}_{\bm{a}^\top \bm{K}\bm{a}=1} \Big\{\frac{1}{n}\sum_{i=1}^n y_i \langle \bm{K}_i, \bm{a}\rangle\Big\}.
\]

In the $k$-th iteration with previous classifier $\bm{a}_{k-1}$, we minimize the margin semi-variance based on previous average margin $\hat{\theta}_{\bm{a}_{k-1}}$. We write
\[
\big\|\bm{a} - \bm{a}_{k-1}\big\|_{\bm{I}_n+\bm{K}}=\big((\bm{a} - \bm{a}_{k-1})^\top (\bm{I}_n+\bm{K}\big)\big(\bm{a} - \bm{a}_{k-1})\big)^{1/2} \ ,
\]
and the optimization problem can be given by
\[
\min_{\bm{a}}\Big\{\sum_{i=1}^n\frac{[(\hat{\theta}_{\bm{a}_{k-1}}-y_i \langle\bm{K}_i,\bm{a}\rangle)_+]^2}{n}+\beta_k \|\bm{a} - \bm{a}_{k-1}\|^2_{\bm{I}_n+\bm{K}}\Big\},
\]
where $\beta_k$ is a proximal regularization parameter. We introduce the index set $\mathcal{A}_k=\{i\colon y_i\langle\bm{K}_i,\bm{a}\rangle<\hat{\theta}_{\bm{a}_{k-1}}\text{ for } i\in [n]\}$, and obtain the empirical margin semi-variance minimizer
\begin{equation*}
  \bm{a}'_k=\bm{M}_k\Big((\bm{K} + \bm{I}_n)\bm{a}_{k-1}+\hat{\theta}_{\bm{a}'_k}\sum_{i\in \mathcal{A}_k}\frac{y_i\bm{K}_i^\top}{n\beta}\Big)
\end{equation*}
where we use the Sherman-Morrison formula to calculate 
\[
\bm{M}_k=\left(\sum_{i\in \mathcal{A}_k}\frac{\bm{K}_i^\top \bm{K}_i}{n\beta} + \bm{K} + \bm{I}_n\right)^{-1}\ .
\] 

We finally maximize the empirical average margin based on the following optimization problem:
\[
\min_{\bm{a_k}}\Big\{ -\frac{1}{n}\sum_{i=1}^n y_i \bm{K}_i \bm{a}_k+ \alpha_k (\bm{a}_k-\bm{a}'_k)^\top\bm{K}(\bm{a}_k-\bm{a}'_k)\Big\},
\]
where $\alpha_k$ is a proximal regularization parameter, and it is easy to get the closed-form solution as follows:
\begin{equation*}
  \bm{a}_k = \bm{a}'_k+{[y_1, y_2, \cdots, y_n]^\top}/{2\alpha_k n}\ .
\end{equation*}
We get the final $\bm{a}_k=\bm{a}_k/\|\bm{a}_k\|_{\bm{K}}$ in the $k$-th iteration.

\begin{table*}[ht]
  \caption{Comparisons of the test accuracies (mean$\pm$std.) on 20 datasets. We use Gaussian kernel for all algorithms. $\bullet/\circ$ indicates that our {M\scriptsize SV\small M\scriptsize Av} approach is significantly better/worse than the corresponding algorithms (pairwise t-tests at 95\% significance level).}\label{table:KPDM}
  \begin{center}
  \begin{small}
  \setlength{\tabcolsep}{1.6mm}{
  \begin{tabular}{|cccccccc|}
    \hline
    Dataset&{M\scriptsize SV\small M\scriptsize Av}&SVM&SVR&LSSVM&ODM&MAMC&FMM\\ \hline
advertise &.9837$\pm$.0014&.9820$\pm$.0023$\bullet$&.9835$\pm$.0019&.9848$\pm$.0016$\circ$&.9838$\pm$.0016&.9547$\pm$.0026$\bullet$&.9810$\pm$.0028$\bullet$\\
australian     &.8580$\pm$.0078&.8225$\pm$.0087$\bullet$&.8210$\pm$.0080$\bullet$&.8357$\pm$.0111$\bullet$&.8329$\pm$.0088$\bullet$&.8302$\pm$.0122$\bullet$&.8237$\pm$.0060$\bullet$\\
bibtex         &.7554$\pm$.0036&.7506$\pm$.0050$\bullet$&.7508$\pm$.0053$\bullet$&.7505$\pm$.0056$\bullet$&.7570$\pm$.0045&.6714$\pm$.0397$\bullet$&.7509$\pm$.0050$\bullet$\\
biodeg    &.8834$\pm$.0081&.8687$\pm$.0115$\bullet$&.8580$\pm$.0089$\bullet$&.8712$\pm$.0093$\bullet$&.8845$\pm$.0095&.8559$\pm$.0098$\bullet$&.8915$\pm$.0096$\circ$\\
breastw        &.9783$\pm$.0013&.9710$\pm$.0013$\bullet$&.9703$\pm$.0050$\bullet$&.9713$\pm$.0018$\bullet$&.9710$\pm$.0013$\bullet$&.9774$\pm$.0022$\bullet$&.9710$\pm$.0013$\bullet$\\
diabetes       &.7576$\pm$.0074&.7574$\pm$.0094&.7502$\pm$.0096$\bullet$&.7411$\pm$.0112$\bullet$&.7504$\pm$.0082$\bullet$&.6779$\pm$.0197$\bullet$&.7385$\pm$.0105$\bullet$\\
emotions       &.7986$\pm$.0122&.7899$\pm$.0156$\bullet$&.7756$\pm$.0164$\bullet$&.8115$\pm$.0124$\circ$&.8101$\pm$.0147$\circ$&.7952$\pm$.0120&.8078$\pm$.0126$\circ$\\
german   &.7465$\pm$.0099&.7443$\pm$.0096&.7198$\pm$.0174$\bullet$&.7353$\pm$.0143$\bullet$&.7285$\pm$.0156$\bullet$&.7198$\pm$.0154$\bullet$&.7277$\pm$.0127$\bullet$\\
halloffame&.9677$\pm$.0025&.9625$\pm$.0020$\bullet$&.9578$\pm$.0053$\bullet$&.9523$\pm$.0060$\bullet$&.9617$\pm$.0025$\bullet$&.9510$\pm$.0028$\bullet$&.9590$\pm$.0034$\bullet$\\
hill-valley   &.6826$\pm$.0184&.6668$\pm$.0177$\bullet$&.6482$\pm$.0135$\bullet$&.7116$\pm$.0678$\circ$&.5950$\pm$.0360$\bullet$&.5402$\pm$.0320$\bullet$&.7886$\pm$.0299$\circ$\\
kc1            &.8739$\pm$.0042&.8701$\pm$.0057$\bullet$&.8689$\pm$.0045$\bullet$&.8703$\pm$.0044$\bullet$&.8716$\pm$.0055&.8764$\pm$.0027$\circ$&.8706$\pm$.0038$\bullet$\\
parkinsons     &.9573$\pm$.0223&.9282$\pm$.0203$\bullet$&.9393$\pm$.0193$\bullet$&.9393$\pm$.0224$\bullet$&.9385$\pm$.0157$\bullet$&.9214$\pm$.0174$\bullet$&.9402$\pm$.0167$\bullet$\\
pbcseq         &.7350$\pm$.0143&.7214$\pm$.0152$\bullet$&.7317$\pm$.0151&.7312$\pm$.0165&.7269$\pm$.0221$\bullet$&.7076$\pm$.0149$\bullet$&.7238$\pm$.0184$\bullet$\\
sleepdata&.7407$\pm$.0129&.7192$\pm$.0105$\bullet$&.7211$\pm$.0061$\bullet$&.7037$\pm$.0083$\bullet$&.7050$\pm$.0083$\bullet$&.7055$\pm$.0056$\bullet$&.7182$\pm$.0094$\bullet$\\
students&.8977$\pm$.0119&.8920$\pm$.0079$\bullet$&.8665$\pm$.0098$\bullet$&.8898$\pm$.0072$\bullet$&.8805$\pm$.0142$\bullet$&.6543$\pm$.0185$\bullet$&.8993$\pm$.0062\\
titanic        &.7825$\pm$.0048&.7823$\pm$.0052&.7823$\pm$.0052&.7823$\pm$.0052&.7767$\pm$.0075$\bullet$&.7823$\pm$.0052&.7823$\pm$.0052\\
tokyo1         &.9406$\pm$.0037&.9241$\pm$.0050$\bullet$&.9257$\pm$.0060$\bullet$&.9337$\pm$.0054$\bullet$&.9253$\pm$.0053$\bullet$&.9229$\pm$.0039$\bullet$&.9248$\pm$.0050$\bullet$\\
vehicle        &.9793$\pm$.0083&.9795$\pm$.0090&.9856$\pm$.0073$\circ$&.9899$\pm$.0070$\circ$&.9722$\pm$.0088$\bullet$&.9625$\pm$.0105$\bullet$&.9805$\pm$.0093\\
vertebra&.8280$\pm$.0240&.7898$\pm$.0098$\bullet$&.7957$\pm$.0183$\bullet$&.8108$\pm$.0176$\bullet$&.8000$\pm$.0122$\bullet$&.7769$\pm$.0126$\bullet$&.7962$\pm$.0153$\bullet$\\
wdbc           &.9819$\pm$.0081&.9810$\pm$.0056&.9772$\pm$.0066$\bullet$&.9842$\pm$.0035&.9795$\pm$.0052&.9526$\pm$.0089$\bullet$&.9526$\pm$.0089$\bullet$\\
\hline
\multicolumn{2}{|c}{Win/Tie/Loss}&15/5/0&16/3/1&13/3/4&14/5/1&17/2/1&14/3/3\\

\hline
\end{tabular}}

\end{small}
\end{center}
\end{table*}

\section{Empirical Study}\label{sec:exp}

In this section, we present extensive empirical studies to verify the effectiveness of our proposed {M\scriptsize SV\normalsize M\scriptsize Av} approach.  We consider 30 datasets, including 20 regular and 10 large-scale datasets. The number of instances varies from 208 to 88588 while the feature dimensionality ranges from 2 to 1836, covering a wide range of properties. The statistics for all datasets can be found in Appendix~\ref{app:experiment_setup}.

We compare our proposed {M\scriptsize SV\normalsize M\scriptsize Av} approach with state-of-the-art algorithms on large-margin and margin distribution optimization: 1)~SVM \cite{Boser:Guyon:Vapnik1992}, 2) SVR \cite{Drucker:Burges:Kaufman:Smola:Vapnik1997} with binary targets, 3) LSSVM \cite{Suykens:Gestel:Brabanter:Moor:Vandewalle2002}, 4) MAMC \cite{Pelckmans:Suykens:DeMoor2007}, 5) ODM \cite{Zhang:Zhou2020}, 6) FMM \cite{Ji:Srebro:Telgarsky2021}. The details of compared algorithms can be found in Appendix~\ref{app:Related}.

For each dataset, we scale all features into the interval $[0, 1]$, and augment each instance $\x$ with constant $1$ for the bias of linear model. The empirical average margin $\theta_{\w}$ may be smaller than zero in experiments, when the proximal regularization parameter $\beta_k$ is set too small.  In such case, we take the opposite model $-\w$ so as to keep the positiveness of empirical average margin.

For our \MSVMAV approach, parameters $\alpha_k$ and $\beta_k$ are set to be constant and selected by $5$-fold cross validation from $\{2^{-10}, 2^{-8}, \cdots, 2^{10}\}$, and the width of Gaussian kernel is chosen from $\{2^{-10}/d, 2^{-8}/d, \cdots, 2^{10}/d\}$. We select the maximum iteration number $T=100$ as a stopping criteria for \MSVMAV. For SVM, SVR, LSSVM and ODM, we set regularization parameter $C\in\{2^{-10}, 2^{-8}, \cdots, 2^{10}\}$ by $5$-fold cross validation again, and the others are set according to their respective references, also shown in Appendix~\ref{app:Related}.

We first compare the margin distributions of our proposed \MSVMAV approach with other algorithms. Figure~\ref{fig:margin} illustrates the cumulative margin distributions of different algorithms on four datasets, and similar trends can be observed on other datasets. As can be seen, the curves of our \MSVMAV approach generally lie on the rightmost side, which shows the margin distributions of \MSVMAV are generally better than that of SVM, ODM and FMM.

We further analyze the generalization performance of our proposed \MSVMAV approach with other compared algorithms. All algorithms are evaluated by $30$ times of random partitions of datasets with $80\%$ and $20\%$ of data for training and testing, respectively. The test accuracies are obtained by averaging over $30$ times. Tables~\ref{table:PDM} and \ref{table:KPDM} show the empirical results of our {M\scriptsize SV\normalsize M\scriptsize Av} and other algorithms with linear and Gaussian kernel functions, respectively.

From Tables~\ref{table:PDM} and \ref{table:KPDM}, our proposed \MSVMAV approach takes significantly better performance than other algorithms for linear and kernel functions, since win/tie/loss counts show that our approach wins for most datasets, and rarely losses. One intuitive explanation is that our \MSVMAV approach achieves better margin distribution by maximizing the empirical average margin and minimizing empirical margin semi-variance, as shown in Figure~\ref{fig:margin}. SVM, SVR and FMM maximize the minimum margin, which ignores the margin distribution. LSSVM and MAMC essentially maximize average margin only, which fails to learn from other margin statistics. ODM takes the average margin and margin variance into consideration, but the process of margin variance minimization could constrain some large margins.

This section omits partial empirical results due to the page limit, including the empirical curves of margin distributions, as well as running time comparisons for our \MSVMAV and other compared algorithms. Relevant results can be found in Appendix~\ref{app:further_result}.

\section{Conclusion}\label{sec:con}
Large margin has been one of the most important principles on the design of algorithms in machine learning, and recent empirical studies show new insights on the optimization of  margin distribution yet without theoretical supports. This work takes one step on this direction by providing a new generalization error bound, which is heavily relevant to margin distribution by incorporating factors such as average margin and semi-variance. Based on the theoretical results, we develop the {M\scriptsize SV\normalsize M\scriptsize Av} approach for margin distribution optimization, and extensive experiments verify its superiority. An interesting future work is to exploit more effective  statistics to characterize the whole margin distribution.

\section*{Acknowledgements}
The authors want to thank the reviewers for helpful comments and suggestions. This work is partially supported by the NSFC (61921006, 61876078, 62006088). Wei Gao is the corresponding author.

\bibliographystyle{named}
\bibliography{mardis}

\onecolumn
\appendix
\begin{center}
{\Large\textbf{Supplementary Material (Appendix)}}
\end{center}

\section{Proof of Theorem~\ref{thm:Main}}\label{app:proof}
We first begin with some useful lemmas.
\begin{lemma}\label{lem:Appendix_McDiarmid}
  (McDiarmid's inequality \cite{Bartlett:Mendelson2002}) Let $X_1, X_2, \cdots, X_n$ be independent random variables taking values in a set A, and assume that $f\colon A^n\to \mathbb{R}$ satisfies
  \[
    \sup_{x_1, x_2, \cdots, x_n, x'_i\in A} \Big|f(x_1, x_2, \cdots, x_n)-f(x_1, x_2, \cdots, x_{i-1}, x'_i, x_{i+1}, \cdots, x_n)\Big|\leq c_i
  \]
  for every $i\in[n]$. Then, for every $t>0$,
  \[
    \Pr\Big[f(X_1, X_2, \cdots, X_n)-E\big[f(X_1, X_2, \cdots, X_n)\big]\geq t\Big]\leq e^{-2t^2/\sum_{i=1}^n c_i^2}\ ,
  \]
  where $e=2.718\cdots$ denotes the Euler's number.
\end{lemma}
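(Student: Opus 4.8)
The plan is to prove this via the method of bounded differences, combining a Doob martingale decomposition with Hoeffding's lemma and the exponential Markov (Chernoff) inequality. First I would fix the function $f$, write $Z=f(X_1,X_2,\cdots,X_n)$, and introduce the Doob martingale
\[
V_i=E\big[Z\mid X_1,X_2,\cdots,X_i\big],\qquad i=0,1,\cdots,n,
\]
so that $V_0=E[Z]$ and $V_n=Z$. The martingale differences $D_i=V_i-V_{i-1}$ then telescope to $Z-E[Z]=\sum_{i=1}^n D_i$, and by the tower property $E[D_i\mid X_1,\cdots,X_{i-1}]=0$. This converts the two-sided structure of $f$ into a sum of conditionally mean-zero increments, to which exponential concentration tools apply.

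The key structural step is to show each increment has conditionally bounded range of width at most $c_i$. Conditioned on $X_1,\cdots,X_{i-1}$, write $g(x)=E[Z\mid X_1,\cdots,X_{i-1},X_i=x]$ and set $U_i=\sup_x g(x)$, $L_i=\inf_x g(x)$. Since $V_{i-1}$ is the average of $g(X_i)$ over $X_i$ given the past, it lies in $[L_i,U_i]$, so $D_i=g(X_i)-V_{i-1}$ takes values in an interval of width $U_i-L_i$. To bound $U_i-L_i\leq c_i$, I would use independence of the coordinates: for any $x,x'$,
\[
g(x)-g(x')=E_{X_{i+1},\cdots,X_n}\big[f(\cdots,x,\cdots)-f(\cdots,x',\cdots)\big],
\]
where only the $i$-th coordinate differs; the integrand is bounded in absolute value by $c_i$ pointwise by the bounded-differences hypothesis, hence $|g(x)-g(x')|\leq c_i$.

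With the conditional range bound in hand, I would apply Hoeffding's lemma: a random variable of conditional mean zero whose conditional range lies in an interval of length $c_i$ satisfies $E\big[e^{sD_i}\mid X_1,\cdots,X_{i-1}\big]\leq e^{s^2 c_i^2/8}$ for all $s>0$. Then by the exponential Markov inequality and repeated conditioning (peeling the increments off from $D_n$ down to $D_1$ via the tower property),
\[
\Pr\big[Z-E[Z]\geq t\big]\leq e^{-st}\,E\Big[e^{s\sum_{i=1}^n D_i}\Big]\leq e^{-st}\prod_{i=1}^n e^{s^2 c_i^2/8}=\exp\Big(-st+\tfrac{s^2}{8}\sum_{i=1}^n c_i^2\Big).
\]
Choosing $s=4t/\sum_{i=1}^n c_i^2$ minimizes the exponent and yields exactly the claimed bound $e^{-2t^2/\sum_{i=1}^n c_i^2}$.

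The main obstacle is the conditional range estimate $U_i-L_i\leq c_i$: the hypothesis is phrased as a pointwise supremum over a single perturbed coordinate, whereas the martingale argument needs control of a difference of \emph{conditional expectations}, and it is precisely the independence of the $X_j$ that lets me pull the perturbation inside the remaining expectation and reduce it to the pointwise bound on $f$. Establishing Hoeffding's lemma itself, by a convexity argument on the cumulant generating function of a bounded random variable, is a secondary and standard technical ingredient.
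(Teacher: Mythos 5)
Your proposal is correct: it is the standard proof of McDiarmid's inequality via the Doob martingale decomposition, the conditional-range bound $U_i-L_i\leq c_i$ (obtained by pushing the single-coordinate perturbation inside the expectation over $X_{i+1},\cdots,X_n$ using independence), Hoeffding's lemma, and a Chernoff bound optimized at $s=4t/\sum_{i=1}^n c_i^2$, which yields exactly the stated constant $2$ in the exponent. Note that the paper itself does not prove this lemma at all --- it is imported as a known result with a citation to \cite{Bartlett:Mendelson2002} and used as a black box in the proof of Lemma~\ref{lem:Appendix_Rademacher} --- so there is no in-paper argument to compare against; your argument is the canonical one and is complete, with the only deferred ingredient (Hoeffding's lemma) correctly identified as standard.
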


\

By applying McDiarmid's inequality, we propose Lemma~\ref{lem:Appendix_Rademacher} to prove the uniform convergence of our squared margin loss.
\begin{lemma}\label{lem:Appendix_Rademacher}
Let $\H=\{h\colon \X\to[-1,+1]\}$ be a real-valued function space, and $\ell_\theta$ denote the squared margin loss for some constant $\theta$. Then, for every $h\in\H$ and $S_n\sim \D$, the following holds :
\[
  \Pr_{S_n}\left[\sup_{h \in\H}\Big\{\mathcal{E}(h)-E_{(\x, y)\sim S_n}\big[\ell_\theta((\x, y);h)\big]\Big\}\geq \left(\frac{4}{\theta}+\frac{4}{\theta^2}\right)\widehat{\mathfrak{R}}_{S_n}(\H) + 3\epsilon \right] \leq 2\exp\left(\frac{-2\epsilon^2\theta^4n}{(1+\theta)^4}\right)\ .
\]
\end{lemma}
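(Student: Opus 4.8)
The plan is to first replace the generalization risk by the expected squared margin loss, and then run a standard symmetrization-and-concentration argument on the loss class $\ell_\theta\circ\H=\{(\x,y)\mapsto \ell_\theta(h,(\x,y))\colon h\in\H\}$, arranging the two concentration steps so that they share a single exponential rate. The first step is a domination inequality: whenever $yh(\x)\le 0$ we have $1-yh(\x)/\theta\ge 1$, so $\I[yh(\x)\le 0]\le \ell_\theta(h,(\x,y))$ pointwise, and taking expectations gives $\mathcal{E}(h)\le E_{\D}[\ell_\theta(h,(\x,y))]$. It therefore suffices to bound the uniform deviation
\[
\Phi(S_n)=\sup_{h\in\H}\Big\{E_{\D}[\ell_\theta(h,(\x,y))]-E_{(\x,y)\sim S_n}[\ell_\theta(h,(\x,y))]\Big\}\ .
\]

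The key quantitative input is the range of $\ell_\theta$. Since $h(\x)\in[-1,+1]$ and $y\in\{+1,-1\}$, the margin $yh(\x)$ lies in $[-1,1]$, so $0\le \ell_\theta(h,(\x,y))\le (1+\theta)^2/\theta^2$, the maximum attained at $yh(\x)=-1$. Replacing a single example in $S_n$ perturbs $\Phi$ by at most $c_i=(1+\theta)^2/(n\theta^2)$, so Lemma~\ref{lem:Appendix_McDiarmid} gives $\Phi(S_n)\le E_{S_n}[\Phi(S_n)]+\epsilon$ with probability at least $1-\exp(-2\epsilon^2\theta^4 n/(1+\theta)^4)$. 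By the standard symmetrization inequality $E_{S_n}[\Phi(S_n)]\le 2\,E_{S_n}[\widehat{\mathfrak{R}}_{S_n}(\ell_\theta\circ\H)]$, and I would then apply Lemma~\ref{lem:Appendix_McDiarmid} a \emph{second} time to the empirical Rademacher complexity $\widehat{\mathfrak{R}}_{S_n}(\ell_\theta\circ\H)$ itself (not to $\widehat{\mathfrak{R}}_{S_n}(\H)$). Because this quantity again has bounded differences $(1+\theta)^2/(n\theta^2)$, its concentration carries \emph{the same} exponent, yielding $E_{S_n}[\widehat{\mathfrak{R}}_{S_n}(\ell_\theta\circ\H)]\le \widehat{\mathfrak{R}}_{S_n}(\ell_\theta\circ\H)+\epsilon$ with probability at least $1-\exp(-2\epsilon^2\theta^4 n/(1+\theta)^4)$. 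This is precisely what lets the two failure probabilities merge into the single prefactor $2\exp(\cdot)$, while chaining the displays accumulates the slack $\epsilon+2\epsilon=3\epsilon$.

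The last step strips the loss off the complexity via Talagrand's contraction lemma. Writing $\ell_\theta(h,(\x,y))=\phi_\theta(yh(\x))$ with $\phi_\theta(u)=[(1-u/\theta)_+]^2$, the map $\phi_\theta$ is $L$-Lipschitz on the relevant domain $[-1,1]$ with $L=\sup_{u\in[-1,1]}|\phi_\theta'(u)|=\tfrac{2}{\theta}\big(1+\tfrac1\theta\big)=2(\theta+1)/\theta^2$, the supremum being attained at $u=-1$; since $|y|=1$, composing with the per-example sign $y$ leaves the Lipschitz constant unchanged, so contraction gives $\widehat{\mathfrak{R}}_{S_n}(\ell_\theta\circ\H)\le L\,\widehat{\mathfrak{R}}_{S_n}(\H)$. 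Combining this with the chain above turns $2\widehat{\mathfrak{R}}_{S_n}(\ell_\theta\circ\H)$ into $2L\,\widehat{\mathfrak{R}}_{S_n}(\H)=\big(\tfrac{4}{\theta}+\tfrac{4}{\theta^2}\big)\widehat{\mathfrak{R}}_{S_n}(\H)$, which is exactly the coefficient in the statement, and a union bound over the two McDiarmid events supplies the factor $2$ in the probability.

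I expect the main difficulty to be bookkeeping rather than depth, concentrated in two places: (i) the decision to concentrate $\widehat{\mathfrak{R}}_{S_n}(\ell_\theta\circ\H)$ instead of $\widehat{\mathfrak{R}}_{S_n}(\H)$, which is what forces both concentration inequalities to share the exponent $2\epsilon^2\theta^4 n/(1+\theta)^4$ and collapse into one $2\exp(\cdot)$ term; and (ii) the exact Lipschitz computation for the squared hinge on the \emph{restricted} domain $[-1,1]$, together with the observation that the factors $y_i$ do not inflate it, so that symmetrization and contraction deliver the precise constant $4/\theta+4/\theta^2$ rather than a loose multiple of it.
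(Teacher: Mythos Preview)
Your proposal is correct and follows essentially the same argument as the paper: bounded differences $(1+\theta)^2/(n\theta^2)$ for both $\Phi$ and $\widehat{\mathfrak{R}}_{S_n}(\ell_\theta\circ\H)$, two McDiarmid applications at the common rate $\exp(-2\epsilon^2\theta^4 n/(1+\theta)^4)$ united into the factor $2$, symmetrization, and then Talagrand contraction with Lipschitz constant $2/\theta+2/\theta^2$ on $[-1,1]$. The only cosmetic difference is that you invoke the domination $\I[yh(\x)\le 0]\le \ell_\theta$ at the outset while the paper defers it to the final line; the substance, including your two highlighted subtleties, matches the paper exactly.
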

\begin{proof}
  For simplicity, we define $\phi_\theta(t)=(1-t/\theta)_+^2(-1\leq t\leq 1)$ and we write $\widetilde{\H}=\{(\x, y)\to yh(\x)\colon h\in \H\}$. Then, we have $\ell_\theta((\x, y); h)=\phi_\theta(yh(\x))$, and the function space of the composition of $\ell_\theta$ and $\H$ can be given by
  \[
    \F_\theta=\Big\{\phi_\theta \circ \tilde{h}\colon \tilde{h} \in \widetilde{\H}\Big\}=\Big\{\ell_\theta((\x, y); h)\colon h\in \H\Big\}\ .
  \]
  For simplicity, we define
  \[
    \Phi_\theta(S_n) = \sup_{f \in\F_\theta}\Big\{E_{(x, y)\sim \D}\big[f(\x, y)\big]-E_{(x, y)\sim S_n}\big[f(\x, y)\big]\Big\}\ ,
  \]
  and thus, we have
  \[
    \Phi_\theta(S_n) - \Phi_\theta(S'_n)\leq \sup_{f \in\F_\theta}\Big\{E_{(x, y)\sim S'_n}\big[f(\x, y)\big] - E_{(x, y)\sim S_n}\big[f(\x, y)\big]\Big\}= \frac{1}{n}\sup_{f \in\F_\theta}\Big\{f(\x_j, y_j) - f(\x'_j, y'_j)\Big\}\ ,
  \]
  where $S'_n$ denotes the training sample different from $S_n$ by only one instance with index $j$: $(\x_j, y_j)$ in $S_n$ and $(\x'_j, y'_j)$ in $S'_n$.

  Since $h\in \H$ maps any instance $(\x, y)$ into the interval $[-1, 1]$, and $y_i\in \{-1, 1\}$, we could obtain $\tilde{h}(\x_i, y_i)\in [-1, 1]$ for any $x_1, x_2, \cdots, x_n, x'_j$, and any $\tilde{h} \in \widetilde{\H}$, which yields
  \[
    \Phi_\theta(S_n) - \Phi_\theta(S'_n)\leq\frac{1}{n}\sup_{f \in\F_\theta}\Big\{f(\x_j, y_j) - f(\x'_j, y'_j)\Big\}=\frac{1}{n}\sup_{\tilde{h} \in \widetilde{\H}}\Big\{\phi_\theta(\tilde{h}(\x_j, y_j))-\phi_\theta(\tilde{h}(\x'_j, y'_j))\Big\}\leq \frac{1}{n}\left(1+\frac{1}{\theta}\right)^2\ .
  \]
  Similarly, we can obtain $\Phi_\theta(S'_n) - \Phi_\theta(S_n)\leq \frac{1}{n}\left(1+\frac{1}{\theta}\right)^2$, thus $ |\Phi_\theta(S_n) - \Phi_\theta(S'_n)|\leq \frac{1}{n}\left(1+\frac{1}{\theta}\right)^2$.

  By applying the McDiarmid inequality, we could get
  \begin{equation}\label{eq:Appendix_McDiarmid_Bound}
    \Pr_{S_n}\Big[\Phi_\theta(S_n)-E_{S_n}\big[\Phi_\theta(S_n)\big]\geq \epsilon \Big]\leq \exp\left(\frac{-2\epsilon^2\theta^4n}{(1+\theta)^4}\right)\ .
  \end{equation}

  We next bound the term $E_{S_n}\big[\Phi_\theta(S_n)\big]$ as follows:
  \begin{eqnarray*}
    E_{S_n}\big[\Phi_\theta(S_n)\big]&=&E_{S_n}\left[\sup_{f \in\F_\theta}\Big\{E_{(x, y)\sim \D}\big[f(\x, y)\big]-E_{(x, y)\sim S_n}\big[f(\x, y)\big]\Big\}\right]\\
    &=&E_{S_n}\left[\sup_{f \in\F_\theta}\Big\{E_{S'_n\sim \D}\big[E_{(\x, y)\sim S'_n}[f(\x, y)\big]-E_{(x, y)\sim S_n}\big[f(\x, y)]\big]\Big\}\right]\\
    &\leq&E_{S_n, S'_n}\left[\sup_{f \in\F_\theta}\Big\{E_{(\x, y)\sim S'_n}[f(\x, y)\big]-E_{(x, y)\sim S_n}\big[f(\x, y)]\Big\}\right]\\
    &=&E_{S_n, S'_n}\left[\sup_{f \in\F_\theta}\frac{1}{n}\sum_{i=1}^n\left(f(\x'_i, y'_i)-f(\x_i, y_i)\right)\right]\\
    &=&E_{S_n, S'_n, \sigma_1, \sigma_2, \cdots, \sigma_n}\left[\sup_{f \in\F_\theta}\frac{1}{n}\sum_{i=1}^n\sigma_i\left(f(\x'_i, y'_i)-f(\x_i, y_i)\right)\right]\\
    &\leq&E_{S'_n, \sigma_1, \sigma_2, \cdots, \sigma_n}\left[\sup_{f \in\F_\theta}\frac{1}{n}\sum_{i=1}^n\sigma_if(\x'_i, y'_i)\right]+E_{S_n, \sigma_1, \sigma_2, \cdots, \sigma_n}\left[\sup_{f \in\F_\theta}\frac{1}{n}\sum_{i=1}^n-\sigma_if(\x_i, y_i)\right]\\
    &=&2E_{S_n, \sigma_1, \sigma_2, \cdots, \sigma_n}\left[\sup_{f \in\F_\theta}\frac{1}{n}\sum_{i=1}^n\sigma_if(\x_i, y_i)\right]=2\mathfrak{R}_n(\F_\theta)\ ,
  \end{eqnarray*}
  where $\mathfrak{R}_n(\F_\theta)$ is the Rademacher complexity of function class $\F_\theta$ defined as $\mathfrak{R}_n(\F_\theta)=E_{S_n}\Big[\widehat{\mathfrak{R}}_{S_n}(\F_\theta)\Big]$. Thus, we could obtain the following inequality from Eqn.~\eqref{eq:Appendix_McDiarmid_Bound}:
  \begin{equation}\label{eq:Bound_Phi_Rademacher}
    \Pr_{S_n}\Big[\Phi_\theta(S_n)\geq 2\mathfrak{R}_n(\F_\theta) + \epsilon \Big]\leq \exp\left(\frac{-2\epsilon^2\theta^4n}{(1+\theta)^4}\right)\ .
  \end{equation}
  Recall the definition
  \[
    \widehat{\mathfrak{R}}_{S_n}(\mathcal{\F_\theta})=E_{\sigma_1,\sigma_2,\ldots,\sigma_n} \left[\sup_{f\in \F_\theta}\frac{1}{n}\sum_{i=1}^n \sigma_i f(\x_i)\right]\ ,
  \]
  and we have the following inequality:
  \begin{eqnarray*}
    \left|\widehat{\mathfrak{R}}_{S_n}(\mathcal{\F_\theta})-\widehat{\mathfrak{R}}_{S'_n}(\mathcal{\F_\theta})\right|&=&\left|E_{\sigma_1,\sigma_2,\ldots,\sigma_n} \left[\sup_{f\in \F_\theta}\left[\frac{1}{n}\sum_{i=1}^n \sigma_i f(\x_i)\right]-\sup_{f\in \F_\theta}\left[\frac{1}{n}\sum_{i=1}^n \sigma_i f(\x_i)-\frac{1}{n}\sigma_j f(\x_j)+\frac{1}{n}\sigma_j f(\x'_j)\right]\right]\right|\\
    &\leq&\left|E_{\sigma_1,\sigma_2,\ldots,\sigma_n} \left[\sup_{f\in \F_\theta}\left[\frac{1}{n}\sum_{i=1}^n \sigma_i f(\x_i)-\frac{1}{n}\sum_{i=1}^n \sigma_i f(\x_i)+\frac{1}{n}\sigma_j f(\x_j)-\frac{1}{n}\sigma_j f(\x'_j)\right]\right]\right|\\
    &=&\left|E_{\sigma_1,\sigma_2,\ldots,\sigma_n} \left[\sup_{f\in \F_\theta}\left[\frac{1}{n}\sigma_j f(\x_j)-\frac{1}{n}\sigma_j f(\x'_j)\right]\right]\right|\\
    &\leq&\frac{1}{n}\left(1+\frac{1}{\theta}\right)^2\ .
  \end{eqnarray*}
  By applying the McDiarmid inequality again, we could obtain
  \begin{equation}\label{eq:Bound_Rademacher}
    \Pr_{S_n}\Big[\mathfrak{R}_{n}(\mathcal{\F_\theta})\geq \widehat{\mathfrak{R}}_{S_n}(\mathcal{\F_\theta}) + \epsilon \Big] \leq \exp\left(\frac{-2\epsilon^2\theta^4n}{(1+\theta)^4}\right)\ .
  \end{equation}
  We use the union bound to combine inequalities Eqn.~\eqref{eq:Bound_Phi_Rademacher} and Eqn.~\eqref{eq:Bound_Rademacher}, which yields the following inequality:
  \[
    \Pr_{S_n}\Big[\Phi_\theta(S_n)\geq 2\widehat{\mathfrak{R}}_{S_n}(\mathcal{\F_\theta}) + 3\epsilon \Big] \leq 2\exp\left(\frac{-2\epsilon^2\theta^4n}{(1+\theta)^4}\right)\ .
  \]
  Note that the $\phi_\theta(t)=(1-t/\theta)_+^2$ is $(\frac{2}{\theta}+\frac{2}{\theta^2})$-Lipschitz, i.e., $|\phi_\theta(t_1)-\phi_\theta(t_2)|\leq (\frac{2}{\theta}+\frac{2}{\theta^2})|t_1-t_2|$ for any $t_1, t_2\in[-1, 1]$, and we could further bound the empirical Rademacher complexity of $\F_\theta$ as follows:
  \begin{eqnarray*}
    \widehat{\mathfrak{R}}_{S_n}(\mathcal{\F_\theta})&\leq& \left(\frac{2}{\theta}+\frac{2}{\theta^2}\right)\widehat{\mathfrak{R}}_{S_n}(\widetilde{\H})\\
    &=&\left(\frac{2}{\theta}+\frac{2}{\theta^2}\right)E_{\sigma_1,\sigma_2,\ldots,\sigma_n} \left[\sup_{\tilde{h} \in \widetilde{\H}}\left[\frac{1}{n}\sum_{i=1}^n \sigma_i \tilde{h}(\x_i)\right]\right]\\
    &=&\left(\frac{2}{\theta}+\frac{2}{\theta^2}\right)E_{\sigma_1,\sigma_2,\ldots,\sigma_n} \left[\sup_{h \in \H}\left[\frac{1}{n}\sum_{i=1}^n \sigma_i y_ih(\x_i)\right]\right]\\
    &=&\left(\frac{2}{\theta}+\frac{2}{\theta^2}\right)E_{\sigma_1,\sigma_2,\ldots,\sigma_n} \left[\sup_{h \in \H}\left[\frac{1}{n}\sum_{i=1}^n \sigma_i h(\x_i)\right]\right]\\
    &=&\left(\frac{2}{\theta}+\frac{2}{\theta^2}\right)\widehat{\mathfrak{R}}_{S_n}(\H)
  \end{eqnarray*}
  which could yield the following inequality:
  \begin{equation}\label{eq:Bound_Phi}
    \Pr_{S_n}\left[\Phi_\theta(S_n)\geq \left(\frac{4}{\theta}+\frac{4}{\theta^2}\right)\widehat{\mathfrak{R}}_{S_n}(\H) + 3\epsilon \right] \leq 2\exp\left(\frac{-2\epsilon^2\theta^4n}{(1+\theta)^4}\right)\ .
  \end{equation}
  From the definition of $\Phi_\theta(S_n)$, we could obtain
  \begin{eqnarray*}
    \Phi_\theta(S_n)&=&\sup_{f \in\F_\theta}\Big\{E_{(\x, y)\sim \D}\big[f(\x, y)\big]-E_{(\x, y)\sim S_n}\big[f(\x, y)\big]\Big\}\\
    &=&\sup_{h \in\H}\Big\{E_{(\x, y)\sim \D}\big[\ell_\theta((\x, y);h)\big]-E_{(\x, y)\sim S_n}\big[\ell_\theta((\x, y);h)\big]\Big\}\\
    &\geq&\sup_{h \in\H}\Big\{E_{(\x, y)\sim \D}\big[\I[yh(\x)\leq0\big]-E_{(\x, y)\sim S_n}\big[\ell_\theta((\x, y);h)\big]\Big\}\\
    &=&\sup_{h \in\H}\Big\{\mathcal{E}(h)-E_{(\x, y)\sim S_n}\big[\ell_\theta((\x, y);h)\big]\Big\}\ .
  \end{eqnarray*}
  Hence, we could finally get the conclusion of Lemma~\ref{lem:Appendix_Rademacher} as follows:
  \[
    \Pr_{S_n}\left[\sup_{h \in\H}\Big\{\mathcal{E}(h)-E_{(\x, y)\sim S_n}\big[\ell_\theta((\x, y);h)\big]\Big\}\geq \left(\frac{4}{\theta}+\frac{4}{\theta^2}\right)\widehat{\mathfrak{R}}_{S_n}(\H) + 3\epsilon \right] \leq 2\exp\left(\frac{-2\epsilon^2\theta^4n}{(1+\theta)^4}\right)\ .
  \]
\end{proof}

Now, we start our proof for Theorem~\ref{thm:Main}.
\begin{proof}
  We construct a finite sequence $a_0, a_1, \cdots, a_K$ such that $(1-\nu)/K\in(1/2n, 1/n)$ and
  \[
    a_i = \nu+\frac{(1-\nu)i}{K}, \quad \text{for}\quad i=0, 1, 2, \cdots, K\ .
  \]
  From Lemma~\ref{lem:Appendix_Rademacher}, we could obtain
  \[
    \Pr_{S_n}\left[\sup_{h \in\H_\nu}\Big\{\mathcal{E}(h)-E_{(\x, y)\sim S_n}\big[\ell_{a_i}((\x, y);h)\big]\Big\}\geq \left(\frac{4}{a_i}+\frac{4}{a_i^2}\right)\widehat{\mathfrak{R}}_{S_n}(\H_\nu) + 3\epsilon \right] \leq 2\exp\left(\frac{-2\epsilon^2a_i^4n}{(1+a_i)^4}\right) \leq 2\exp\left(\frac{-2\epsilon^2\nu^4n}{(1+\nu)^4}\right)
  \]
  for all $i=0, 1, 2, \cdots, K$. It is noteworthy that the function class investigated here is $\H_\nu$, which consists of relatively-good classifiers. By applying the union bound, the following inequality holds:
  \[
    \Pr_{S_n}\left[\mathop{\sup_{h \in\H_\nu}}_{0\leq i\leq K-1}\Big\{\mathcal{E}(h)-E_{(\x, y)\sim S_n}\big[\ell_{a_i}((\x, y);h)\big]\Big\}\geq \left(\frac{4}{a_i}+\frac{4}{a_i^2}\right)\widehat{\mathfrak{R}}_{S_n}(\H_\nu) + 3\epsilon \right] \leq 2K\exp\left(\frac{-2\epsilon^2\nu^4n}{(1+\nu)^4}\right)\ .
  \]
  For any $\theta_h\in[\nu, 1)$, there exists an $a_{i^*}$ such that
  \[
    a_{i^*-1}\leq \theta_h < a_{i^*}\ .
  \]
  By setting $\delta = 2K\exp\left(\frac{-2\epsilon^2\nu^4n}{(1+\nu)^4}\right)$, we have $\epsilon=\sqrt{(1+\nu)^4\ln(2K/\delta)/(2\nu^4 n)}$, and the following holds for every $h\in\H_\nu$ with probability at least $1-\delta$:
  \begin{equation}\label{eq:proof_1}
    \mathcal{E}(h)\leq E_{(\x, y)\sim S_n}\Big[\ell_{a_{i^*}}\big((\x, y);h\big)\Big] + \left(\frac{4}{a_{i^*}}+\frac{4}{a_{i^*}^2}\right)\widehat{\mathfrak{R}}_{S_n}(\H_\nu) + \frac{3(1+\nu)^2}{\nu^2}\sqrt{\frac{\ln\frac{2K}{\delta}}{2n}}\ .
  \end{equation}
  To further upper bound the term $E_{(\x, y)\sim S_n}[\ell_{a_{i^*}}((\x, y);h)]$, we need to consider each sample $(\x_i, y_i)\in S_n$. If $y_i h(\x_i)\geq a_{i^*}$, we have
  \[
    \ell_{a_{i^*}}\big((\x_i, y_i);h\big)=0= \left(1-\frac{y_i h(\x_i)}{\theta_h}\right)_+^2=\ell_{\theta_h}\big((\x_i, y_i);h\big)\ .
  \]
  If $\theta_h\leq y_i h(\x_i)< a_{i^*}$, $\ell_{\theta_h}((\x_i, y);h)=0$, and we have
  \[
    \ell_{a_{i^*}}\big((\x_i, y_i);h\big)=\left(1-\frac{y_i h(\x_i)}{a_{i^*}}\right)^2\leq \left(\frac{a_{i^*}-\theta_h}{a_{i^*}}\right)^2\leq\ell_{\theta_h}\big((\x_i, y_i);h\big)+\frac{(1-\nu)^2}{K^2\nu^2}\ .
  \]
  If $y_i h(\x_i)<\theta_h$, we could obtain
  \begin{eqnarray*}
    \ell_{a_{i^*}}\big((\x_i, y_i);h\big)&=&\left(1-\frac{y_i h(\x_i)}{a_{i^*}}\right)^2\\
    &=&\left(1-\frac{y_ih(\x_i)}{\theta_h} + \frac{y_ih(\x_i)}{\theta_h}-\frac{y_i h(\x_i)}{a_{i^*}}\right)^2\\
    &=&\left(1-\frac{y_ih(\x_i)}{\theta_h}\right)^2 +2 \left(1-\frac{y_ih(\x_i)}{\theta_h}\right)\left(\frac{y_ih(\x_i)}{\theta_h}-\frac{y_i h(\x_i)}{a_{i^*}}\right)+\left(\frac{y_ih(\x_i)}{\theta_h}-\frac{y_i h(\x_i)}{a_{i^*}}\right)^2\\
    &\leq&\ell_{\theta_h}\big((\x_i, y_i);h\big) + 2\left(1+\frac{1}{\nu}\right)\frac{1-\nu}{K\nu^2}+\frac{(1-\nu)^2}{K^2\nu^2}\ .
  \end{eqnarray*}
  In summary, we have
  \[
    E_{(\x, y)\sim S_n}\big[\ell_{a_{i^*}}((\x, y);h)\big]=\frac{1}{n}\sum_{i=1}^n \ell_{a_{i^*}}((\x_i, y_i);h)\leq E_{(\x, y)\sim S_n}\big[\ell_{\theta_h}((\x, y);h)\big]+ 2\left(1+\frac{1}{\nu}\right)\frac{1-\nu}{K\nu^2}+\frac{(1-\nu)^2}{K^2\nu^2}\ .
  \]
  By combining with eqn.~\eqref{eq:proof_1}, we could obtain that, the following holds for every $h\in\H_\nu$ with probability at least $1-\delta$:
  \begin{eqnarray*}
    \mathcal{E}(h)&\leq& E_{(\x, y)\sim S_n}\Big[\ell_{a_{i^*}}\big((\x, y);h\big)\Big] + \left(\frac{4}{a_{i^*}}+\frac{4}{a_{i^*}^2}\right)\widehat{\mathfrak{R}}_{S_n}(\H_\nu) + \frac{3(1+\nu)^2}{\nu^2}\sqrt{\frac{\ln\frac{2K}{\delta}}{2n}}\\
    &\leq& E_{(\x, y)\sim S_n}\big[\ell_{\theta_h}((\x, y);h)\big]+ 2\left(1+\frac{1}{\nu}\right)\frac{1-\nu}{K\nu^2}+\frac{(1-\nu)^2}{K^2\nu^2}+\left(\frac{4}{\theta_h}+\frac{4}{\theta_h^2}\right)\widehat{\mathfrak{R}}_{S_n}(\H_\nu) + \frac{3(1+\nu)^2}{\nu^2}\sqrt{\frac{\ln\frac{2K}{\delta}}{2n}}\\
    &\leq&E_{(\x, y)\sim S_n}\big[\ell_{\theta_h}((\x, y);h)\big]+ \frac{2+2\nu}{n\nu^3}+\frac{1}{n^2\nu^2}+\left(\frac{4}{\theta_h}+\frac{4}{\theta_h^2}\right)\widehat{\mathfrak{R}}_{S_n}(\H_\nu) + \frac{3(1+\nu)^2}{\nu^2}\sqrt{\frac{1}{2n}\ln\frac{4n}{\delta}}\\
    &=&\frac{\text{SV}(h)}{\theta_h^2}+O\left(\frac{\widehat{\mathfrak{R}}_n(\H_\nu)}{\theta_h^2}+\sqrt{\frac{1}{2n}\ln\frac{4n}{\delta}} \right)\ ,
  \end{eqnarray*}
  which completes the proof.
\end{proof}

\section{Initialization of {M\small SV\Large M\small Av}}\label{app:formulation}
This section introduces the detailed proof for the closed-form solution in Eqn.~\eqref{eq:initial_primal} as follows:
\begin{proof}
The primal optimization problem for empirical average margin maximization is as follows:
\[
  \bm{w}_0=\mathop{\arg\max}_{\|\w\|^2_2=1} \sum_{i=1}^n \frac{y_i \langle\w,\x_i\rangle}{n}=\mathop{\arg\min}_{\|\w\|^2_2=1} -\sum_{i=1}^n \frac{y_i \langle\w,\x_i\rangle}{n}\ .
\]
It is necessary to introduce a Lagrange multiplier $\lambda\geq 0$ for the constraint $\left\|w\right\|_2^2=1$, and we could get the Lagrangian function
\[
  L(\w, \lambda) = -\frac{1}{n}\sum_{i=1}^n y_i \langle\w,\x_i\rangle + \lambda\left(\left\|\w\right\|_2^2 - 1\right)\ ,
\]
and its partial derivative is given by
\[
  \frac{\partial L(\w, \lambda)}{\partial \w} = -\frac{1}{n}\sum_{i=1}^n y_i \x_i + 2\lambda \w\ .
\]
By setting the partial derivative of $\w$ to zero, we could obtain
\[
  \w = \frac{1}{2\lambda n}\sum_{i=1}^n y_i \x_i\ ,
\]
which yields the dual problem as follows:
\[
  \max_{\lambda} -\frac{1}{n}\sum_{j=1}^n y_j \left\langle\frac{1}{2\lambda n}\sum_{i=1}^n y_i \x_i, \x_j\right\rangle + \lambda\left(\left\|\frac{1}{2\lambda n}\sum_{i=1}^n y_i \x_i\right\|_2^2 - 1\right)=-\frac{1}{4\lambda n^2}\left\|\sum_{i=1}^n y_i \x_i\right\|_2^2 -\lambda\ .
\]
It is obvious that the optimal value of $\lambda$ equals to
\[
  \lambda^*=\frac{1}{2n}\left\|\sum_{i=1}^n y_i \x_i\right\|_2\ .
\]
Hence, the empirical average margin maximizer, i.e., the initial value, is as follows:
\[
  \w_0 = \sum_{i=1}^n\frac{y_i\x_i}{\left\|\sum_{i=1}^n y_i \x_i\right\|_2}\ ,
\]
which completes the proof.
\end{proof}

\section{Detailed Information of Datasets}\label{app:experiment_setup}

\begin{table}
  \caption{Characteristic of 30 datasets.}
  \label{table:datasets}
  \begin{center}
  \begin{tabular}{|ccccccc|}
  \hline
  Scale&Dataset&\#Instance&\#Feature&Dataset&\#Instance&\#Feature\\
  \hline
  Regular&advertise&3279&1558&kc1&2109&21\\
  &australian&690&14&parkinsons&208&60\\
  &bibtex&7395&1836&pbcseq&1113&21\\
  &biodeg&1055&41&sleepdata&1024&2\\
  &breastw&683&9&students&1000&18\\
  &diabetes&768&8&titanic&2201&3\\
  &emotions&593&72&tokyo1&959&44\\
  &german&1000&24&vehicle&846&18\\
  &halloffame&1320&22&vertebra&310&6\\
  &hill-valley&1212&100&wdbc&569&30\\
  \hline
  Large&a9a&32561&123&magic&19020&10\\
  &acoustic&78823&50&nomao&34465&118\\
  &bank&45211&51&phishing&11055&68\\
  &eurgbp&43825&10&pol&15000&44\\
  &jm1&10880&21&run-walk&88588&6\\
  \hline
  \end{tabular}
  \end{center}
  \vskip -0.1in
\end{table}

This section introduces further experimental settings. We select 30 open-source datasets, including 20 regular scale and 10 large scale datasets, to verify the effectiveness of our proposed \MSVMAV approach. All datasets can be found on the UCI datasets website or the OpenML website. Table~\ref{table:datasets} shows the detailed information of each dataset.

\begin{figure}[!t]
  \centering
  \includegraphics[width=0.4\textwidth]{./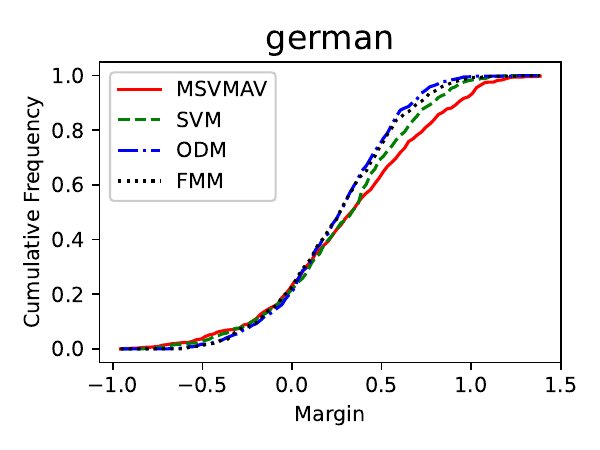}
  \includegraphics[width=0.4\textwidth]{./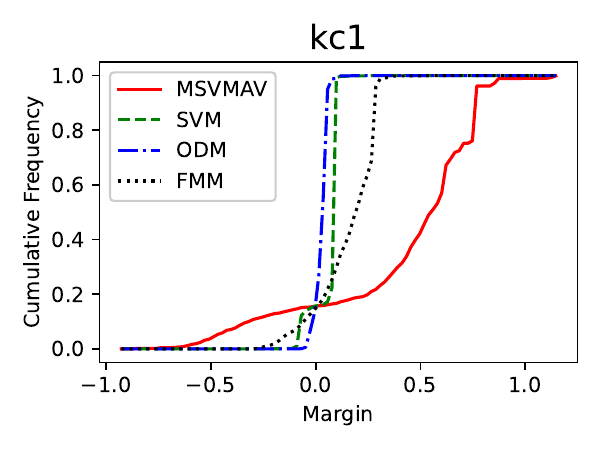}\\
  \includegraphics[width=0.4\textwidth]{./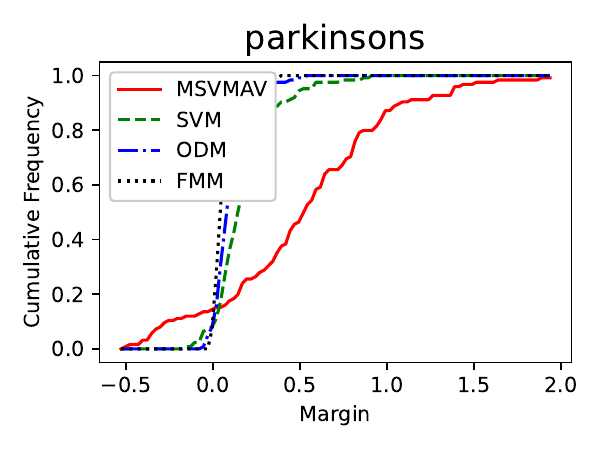}
  \includegraphics[width=0.4\textwidth]{./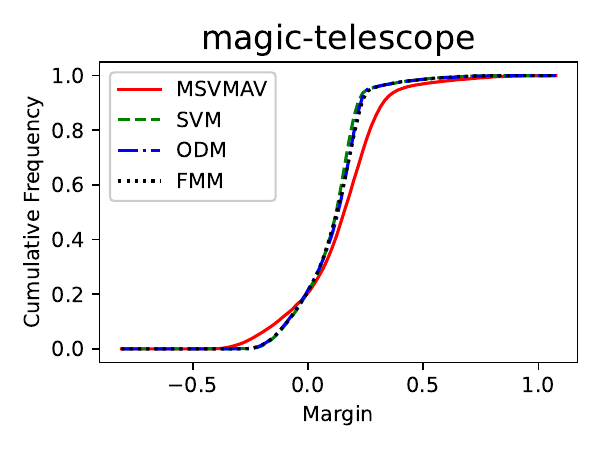}
  \centering
  \caption{Cumulative frequency versus margin of our {M\scriptsize SV\small M\scriptsize Av} and other algorithms such as SVM, ODM and FMM. The more right the curve, the better the margin distribution.}\label{fig:app_margin}
  \end{figure}

\section{Introductions to Other Algorithms}\label{app:Related}
This section introduces detailed information about six state-of-the-art algorithms that we compare our \MSVMAV approach with and the implementation of these algorithms. SVM (Support Vector Machine) is one of the most famous linear classification algorithms. Traditional hard-margin SVM maximizes the minimum margin. The classifier can be obtained via the following optimization problem:
\begin{eqnarray*}
  \min_{\w} & &\frac{1}{2}\|\w\|_2^2\\
  s.t.& &y_i \langle \w, \bm{\phi}(\x_i)\rangle \geq 1, \forall i\in[n].
\end{eqnarray*}
For non-separable data, the training data cannot be separated without error. Hence, the soft-margin SVM is proposed, which requires the solution of the following optimization problem:
\begin{eqnarray*}
  \min_{\bm{w}, b, \bm{\xi}}& &\frac{1}{2}\|\w\|_2^2 + C\sum_{i=1}^m \xi_i\\
  s.t.& &y_i(\langle\w, \bm{\phi}(\x_i)\rangle+b)\geq 1-\xi_i,\\
  & &\xi_i \geq 0,
\end{eqnarray*}
where $C>0$ is the penalty parameter for the error term, i.e., the hinge loss. On one hand, soft-margin SVM maximizes the minimum margin, while on the other hand, it tries to minimize the hinge loss. The optimization problems for SVR (Support Vector Regression) and LSSVM (Least-square Support Vector Machine) are similar to SVM's, while the only difference lies in the error term. SVR adopts the $\epsilon$-insensitive loss function while LSSVM uses the squared loss.

MAMC (Maximal Average Margin for Classifiers) and ODM (Optimal Margin Distribution Machine) considers to optimize the margin distribution directly. MAMC \cite{Pelckmans:Suykens:DeMoor2007} maximizes the average margin. The algorithm is very time-efficient because the optimization problem has closed-form solution. However, the experimental results indicate that the overall performance is the worst in most situations since MAMC considers the average margin only. ODM \cite{Zhang:Zhou2020} considers to optimize the empirical average margin and the empirical margin variance simultaneously. However, ODM introduces three hyper-parameters, which makes it difficulty to tune these parameters.

Recently, \citeauthor{Ji:Srebro:Telgarsky2021}~\shortcite{Ji:Srebro:Telgarsky2021} has proposed a new method to directly maximize the minimum margin, i.e., $\min_{i} y_i \langle \w, \x_i \rangle / \|\w\|_2^2$, via momentum-based gradient method. We call this method as FMM (Fast Margin Maximization) in this paper for short.

For linear SVM and SVR, we use the LIBLINEAR implementation, while for kernel SVM and SVR, we use the LIBSVM implementation instead\cite{Fan:Chang:Hsieh:Wang:Lin2008}. We use the LS-SVMlib\footnote{\url{https://www.esat.kuleuven.be/sista/lssvmlab/}} to implement LSSVM\cite{Suykens:Gestel:Brabanter:Moor:Vandewalle2002}. We use the source code from authors' website\footnote{\url{http://www.lamda.nju.edu.cn/code_ODM.ashx}} to implement ODM\cite{Zhang:Zhou2014}. We implement MAMC\cite{Pelckmans:Suykens:DeMoor2007} and FMM\cite{Ji:Srebro:Telgarsky2021} exactly according to their own respective work.

Apart from $\alpha_k$, $\beta_k$ for our {M\scriptsize SV\normalsize M\scriptsize Av} approach, $C$ for SVM, SVR, LSSVM, ODM and the width of Gaussian kernel, the parameter $\mu$ and $\theta$ for ODM are selected from $\{0.2, 0.4, 0.6, 0.8\}$ according to \cite{Zhang:Zhou2020}. For FMM, $\theta$, which can be viewed as the learning rate, is selected from the set $\{2^{-10}, 2^{-8}, \cdots, 2^0\}$ and $T$, which controls the maximum iteration number, is selected from $\{50, 100, 200, 400, 800, 1600\}$.

\section{Further Empirical Results}\label{app:further_result}
Due to page limit, we present more experimental results and running time comparison in this section.

\begin{figure}[!t]
  \centering
  \includegraphics[width=1\linewidth]{./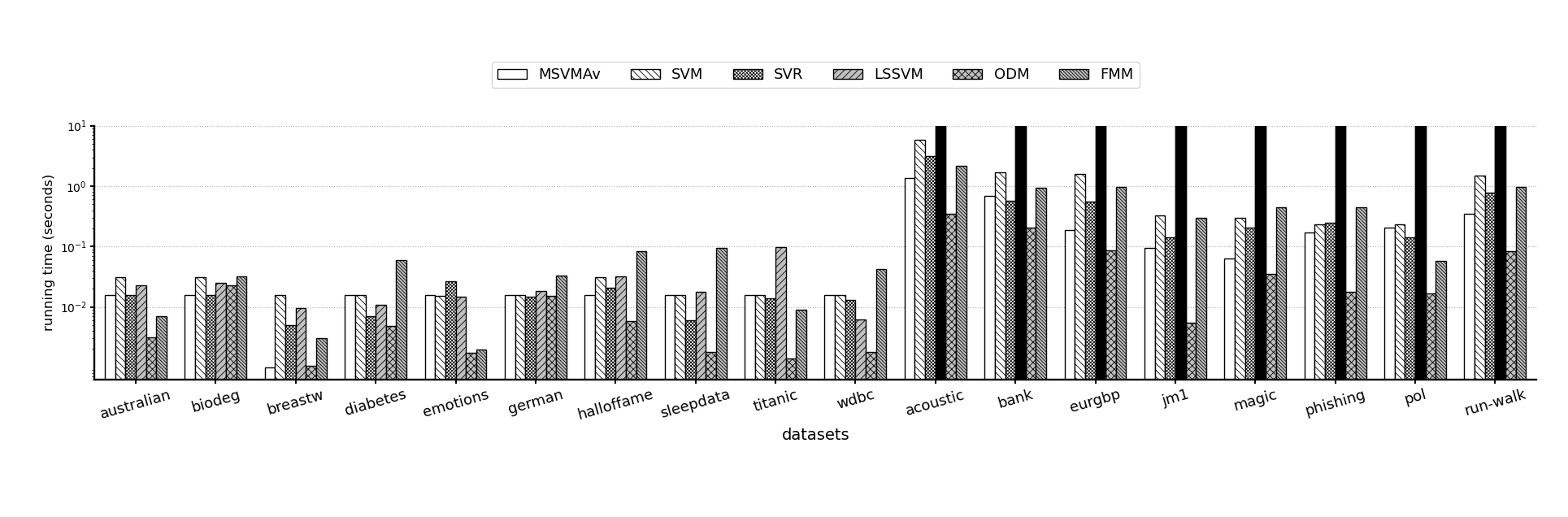}
  \caption{Full training time of our {M\scriptsize SV\small M\scriptsize Av}, SVM, SVR, LSSVM, ODM and FMM.}\label{fig:app_running_time}
\end{figure}

Figure~\ref{fig:app_margin} illustrates the curves of the cumulative margin distributions for our {M\scriptsize SV\normalsize M\scriptsize Av} approach and other algorithms on four more datasets. As can be seen, our {M\scriptsize SV\normalsize M\scriptsize Av} approach takes the rightmost curve, indicating better margin distribution in comparisons with other algorithms.

We compare the training time on a training set of one partition of our method with SVM, SVR, LSSVM, ODM, and FMM. For fair comparisons, we run all experiments on the same computer with a single CPU (Intel Core i5-9500 @ 3.00GHz), 16GB memory and Windows 10 operating system. The running time (in seconds) of these methods on each dataset is shown in Figure~\ref{fig:app_running_time}, which shows that our \MSVMAV approach has comparable time cost to other methods.

\end{document}